\documentclass[letterpaper]{article} 
\usepackage{aaai2026}  
\usepackage{times}  
\usepackage{helvet}  
\usepackage{courier}  
\usepackage[hyphens]{url}  
\usepackage{graphicx} 
\urlstyle{rm} 
\usepackage{natbib}  
\usepackage{caption} 
%
%

\usepackage{times}
\usepackage{epsfig}
\usepackage{graphicx}
\usepackage{amsmath}
\usepackage{amssymb}

\usepackage{url}
\usepackage{graphicx}
\usepackage{booktabs}
\usepackage[table,dvipsnames]{xcolor}
\usepackage[table,x11names]{xcolor}
\usepackage{amsmath}
\usepackage{amssymb}
\usepackage{amsthm}
\usepackage{pifont}
\usepackage{lipsum}
\usepackage[export]{adjustbox}
\usepackage{capt-of}
\usepackage[linesnumbered,ruled,vlined]{algorithm2e}
\usepackage{kotex}
\usepackage{thm-restate}
\usepackage{multirow}
\usepackage{caption}
\usepackage[figoff]{figcaps}

\usepackage{multicol}

\newcommand{\mbx}{\mathbf{x}}

\newcommand{\mbc}{\mathbf{c}}
\newcommand{\me}{\boldsymbol{\epsilon}}

\newcommand{\mc}{\mathcal}
\newcommand{\mbf}{\mathbf}
\newcommand{\bs}{\boldsymbol}

\usepackage{booktabs}  
\newtheorem{thm}{Theorem}
\newtheorem{remark}{Remark}

\newtheorem{corollary}{Corollary}[thm]
\newtheorem{lemma}{Lemma}
\usepackage{fontawesome5}
\usepackage{resizegather}





\frenchspacing  
\setlength{\pdfpagewidth}{8.5in} 
\setlength{\pdfpageheight}{11in} 
%

%
\usepackage{newfloat}
\usepackage{listings}
\DeclareCaptionStyle{ruled}{labelfont=normalfont,labelsep=colon,strut=off} 
\lstset{%
	basicstyle={\footnotesize\ttfamily},
	numbers=left,numberstyle=\footnotesize,xleftmargin=2em,
	aboveskip=0pt,belowskip=0pt,%
	showstringspaces=false,tabsize=2,breaklines=true}
%
\pdfinfo{
/TemplateVersion (2026.1)
}

\setcounter{secnumdepth}{0} 

%


\title{Toward the Frontiers of Reliable Diffusion Sampling \\ via Adversarial Sinkhorn Attention Guidance}
\author{Kwanyoung Kim}
\affiliations{
    Samsung Research\\
    k$\_$0.kim@samsung.com
}

\begin{document}

\twocolumn[{%
	\renewcommand\twocolumn[1][]{#1}%
	\maketitle
        \vspace{-1em}
	\begin{center}
		\centering
		\captionsetup{type=figure}
		\includegraphics[width=0.95\linewidth]{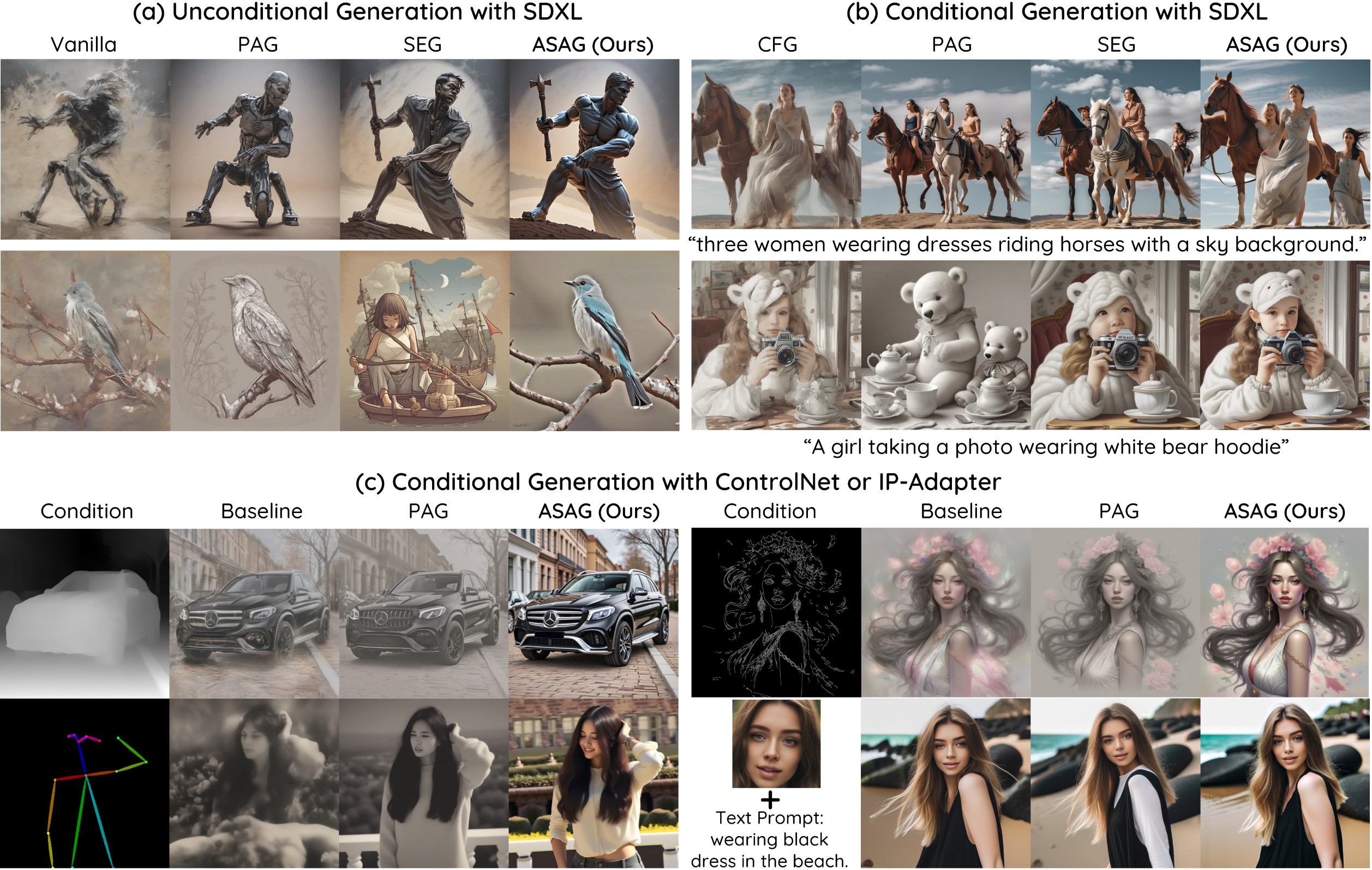} 
            \vspace{-0.5em}
		\captionof{figure}{Qualitative comparison. (a) unconditional generation, (b) conditional generation with other guidance sampling methods, and (c) conditional generation using ControlNet and IP-Adapter. Our method, \textbf{ASAG}, significantly improves visual quality in both unconditional and conditional settings. It also remarkably enhances external frameworks like ControlNet and IP-Adapter. Crucially, ASAG requires no additional training, making it broadly compatible and readily deployable.}\label{fig:main}
	\end{center}}
]

\begin{abstract}
Diffusion models have demonstrated strong generative performance when using guidance methods such as classifier-free guidance (CFG), which enhance output quality by modifying the sampling trajectory. These methods typically improve a target output by intentionally degrading another, often the unconditional output, using heuristic perturbation functions such as identity mixing or blurred conditions. However, these approaches lack a principled foundation and rely on manually designed distortions.
In this work, we propose \textbf{A}dversarial \textbf{S}inkhorn \textbf{A}ttention \textbf{G}uidance (\textbf{ASAG}), a novel method that reinterprets attention scores in diffusion models through the lens of optimal transport and intentionally disrupt the transport cost via Sinkhorn algorithm. Instead of naively corrupting the attention mechanism, ASAG injects an adversarial cost within self-attention layers to reduce pixel-wise similarity between queries and keys. This deliberate degradation weakens misleading attention alignments and leads to improved conditional and unconditional sample quality.
ASAG shows consistent improvements in text-to-image diffusion, and enhances controllability and fidelity in downstream applications such as IP-Adapter and ControlNet. The method is lightweight, plug-and-play, and improves reliability without requiring any model retraining.
\end{abstract}


\section{Introduction}
Diffusion models have led to substantial progress in the field of image and video generation~\cite{stablediffusion,stablediffusion3,dreambooth,pixart2,sana,stablevideo,videocrafter2}. Despite their effectiveness, direct or naïve sampling often results in subpar output quality. A widely adopted solution is Classifier-Free Guidance (CFG)~\cite{CFG}, which enhances class-conditional generation by computing the difference between the score functions of conditional and unconditional models and applying a weighted adjustment. Although CFG improves sample fidelity, it introduces additional training and can lead to degraded outputs when the guidance scale is too large.

Inspired by CFG, a number of guidance-based sampling techniques have emerged~\cite{SAG,AutoGuide,PAG,SEG,CFG++,TSG,SelfG}. A common strategy involves generating "weakened outputs" to serve as auxiliary signals that guide the primary model toward better sampling. While such approaches have shown empirical success, they come with inherent limitations. For example, AutoGuidance (AG) \cite{AutoGuide} relies on a poorly trained model, which is often unstable and difficult to optimize. To avoid retraining, attention based alternatives have been proposed. Perturbed Attention Guidance~\cite{PAG} distorts attention maps using identity masking, and Smooth Energy Guidance (SEG)~\cite{SEG} applies Gaussian blur to attention weights. 

Although these methods aim to weaken model outputs for improved guidance, they rely on naive heuristic functions and lack clear theoretical justification as to why such perturbations consistently enhance sample quality. This theoretical gap naturally raises a fundamental question: \textit{what constitutes an optimal perturbation for weakening outputs in a theoretically grounded manner?}

To address this question, we first revisit the intrinsic connection between attention mechanisms and optimal transport (OT) theory. One of the key contributions of this paper is the discovery and reinterpretation of classical results from optimal transport, highlighting how attention mechanisms can be framed within the OT framework. Prior studies, such as those by~\cite{sinkformers,otseg}, have shown that attention computations can be improved by employing OT-inspired methods, specifically the Sinkhorn-Knopp algorithm~\cite{sinkhorn}, to produce doubly stochastic attention maps. While these previous studies have largely utilized OT theory to boost attention performance in vision and language tasks~\cite{chen2020graph,liu2020semantic,chen2022prompt}, our approach deviates significantly from this established perspective.

Specifically, we propose a novel guidance framework, termed Adversarial Sinkhorn Attention Guidance (ASAG), which explicitly reinterprets self-attention scores—representing pixel-level similarities and interactions—through the lens of OT theory. In contrast to classical OT-based approaches that minimize transport cost to encourage semantic alignment between image embeddings, ASAG adopts an adversarial perspective by intentionally minimizing their interactions. This leads to an entropy-maximizing attention map, which corresponds to an optimally perturbed self-attention distribution.

In doing so, we establish a theoretically grounded method for systematically disrupting similarity-based alignments in diffusion models. To the best of our knowledge, this is the first work to leverage OT theory to construct an adversarial attention perturbation, leading to significant improvements in the fidelity and controllability of diffusion-based image generation.

Building upon these theoretical foundations and insights, we demonstrate that our proposed method, ASAG, effectively improves generation quality in both unconditional and conditional diffusion sampling settings. Furthermore, extensive experiments show that when combined with existing frameworks such as ControlNet and IP-Adapter, ASAG consistently outperforms other guidance approaches by a significant margin. Our key contributions can be summarized:

\begin{itemize} \item We propose ASAG, a novel and theoretically grounded diffusion guidance method that adversarially disrupts attention mechanisms by intentionally minimizing interaction between image embeddings.

\item We provide an in-depth theoretical analysis, leveraging OT theory to establish a clear and rigorous foundation for our guidance method. To the best of our knowledge, our study is the first to reinterpret OT theory from an adversarial perspective for perturbing attention scores to improve diffusion generative performance.

\item We empirically demonstrate that ASAG significantly enhances performance across general generative tasks, including unconditional and conditional image generation. Furthermore, we show the broad applicability and generalizability of ASAG by integrating it with widely-used generative frameworks such as ControlNet and IP-Adapter, achieving substantial improvements over existing guidance approaches. \end{itemize}

\section{Preliminary}
\subsection{Diffusion Models}
Diffusion models (DM)~\cite{ho2020denoising, song2021scorebased} are a class of generative models that produce samples by reversing a known forward diffusion process through estimation of the score function of a given data distribution. Specifically, given samples $\mbx_0 \sim q_{\text{data}}(\mathbf{x})$, DMs define a forward noise-adding Markov process 
\(q(\mbx_{t}|\mbx_{t-1}) = \mathcal{N}(\sqrt{1-\beta_t}\mbx_{t-1}, \beta_t \mathbf{I}), \quad t=1,\dots,T,\)
where the variance schedule $\{\beta_t\}_{t=1,\dots,T}$ is predefined. 

Consequently, the distribution at any intermediate timestep $t$ can be explicitly expressed as
\(q(\mbx_t) = \mathcal{N}(\sqrt{\bar{\alpha}_t}\mbx_0, (1-\bar{\alpha}_t)\mathbf{I}),\)
with the cumulative coefficient defined as $\bar{\alpha}_t = \prod_{i=1}^{t}\alpha_i$, where $\alpha_t = 1 - \beta_t$. As $t$ approaches $T$, the distribution $q(\mbx_T)$ converges to an isotropic Gaussian distribution, \( q(\mbx_T) \approx \mathcal{N}(0,\mathbf{I}).\)
To generate samples, diffusion models parameterize the reverse diffusion process as,
\(p_\theta(\mbx_{t-1}|\mbx_t) = \mathcal{N}(\bs{\mu}_\theta(\mbx_t,t), \Sigma_\theta(\mbx_t,t)),\)
where the model parameters $\theta$ can be optimized by denoising score matching (DSM) objective~\cite{vincent2011connection}:
\begin{align}
\underset{\theta}{\min}\;\mathbb{E}_{\mbx_0,\me}\left[ \|\me_\theta(\sqrt{\bar{\alpha}_t}\mbx_0+\sqrt{1-\bar{\alpha}_t}\me, t) - \me \|^2_2\right],
\quad  \label{eq:dsm}
\end{align}
where $\me \sim \mathcal{N}(0,\mathbf{I})$.
Once trained, sampling from a diffusion model proceeds by sequentially reversing the diffusion steps starting from an isotropic Gaussian noise sample. For instance, the Denoising Diffusion Implicit Model (DDIM)~\cite{song2021ddim} generates samples by iteratively applying the update rule:
\begin{align}
\mbx_{t-1} = \sqrt{\bar{\alpha}_{t-1}}\hat{\mbx}_0(t) + \sqrt{1-\bar{\alpha}_{t-1}}\me_{\theta}(\mbx_t,t),
\end{align}
where the denoised estimate $\hat{\mbx}_0(t) = \mathbb{E}[\mbx_0|\mbx_t]$ is computed using Tweedie's formula~\cite{efron2011tweedie,kim2021noise2score}:
\begin{align}
\hat{\mbx}_0(t) = \frac{\mbx_t - \sqrt{1-\bar{\alpha}_t}\me_\theta(\mbx_t,t)}{\sqrt{\bar{\alpha}_t}}.
\end{align}
The sampling step described above is repeated recursively from timestep $T$ down to timestep $1$.

\subsection{Guidance Sampling in Diffusion Models}\label{sec:pre_guidance}
To enhance generation with arbitrary conditions (typically class or textual embeddings), various guidance-based sampling methods have been proposed~\cite{CG,CFG,CFG++,SAG,AutoGuide,PAG,SEG,TSG}. 
Let the conditional model be defined as $\me_{\theta}(\mbx_t,t,\mbc)$, which we simplify as $\me_{\theta}(\mbx_t,\mbc)$, and the unconditional counterpart as $\me_{\theta}(\mbx_t,\mathbf{\varnothing})$.

Several studies~\cite{CFG,SAG,PAG} have proposed generalized guidance frameworks using imaginary labels. We revisit this idea with an implicit discriminator $\mc{D}(\mbx_t) = \frac{p(\mbf{y}|\mbx_t)}{p(\tilde{\mbf{y}}|\mbx_t)}$, which distinguishes desirable from undesirable samples during the diffusion process. Here, $\mbf{y}$ and $\tilde{\mbf{y}}$ denote imaginary desirable and undesirable labels, respectively.

Similar to CFG, which uses an implicit classifier, implicit discriminator $\mc{D}$ encourages sampling along desirable trajectories while suppressing undesirable ones. By applying Bayes’ rule and a mathematical transformation, we derive the following guided sampling objective:
\begin{align}
\me'_{\theta}(\mbx_t,\mbc) 
&= \me_{\theta}(\mbx_t,\mbc) - s \sigma_t \nabla_{\mbx_t} \left( \log p(\mbx_t|\mbf{y}) - \log p(\mbx_t|\tilde{\mbf{y}}) \right) \notag \\
&= \me_{\theta}(\mbx_t,\mbc) + s \left( \me_{\theta}(\mbx_t,\mbc) - \tilde{\me}_{\theta}(\mbx_t,\mbc) \right), \label{eq:weak}
\end{align}
where $s$ is the guidance strength. Since the diffusion model learns to approximate the score function of the desirable distribution, the term $\sigma_t \nabla_{\mbx_t} \log p(\mbx_t|\mbf{y})$ can be replaced by $\me_{\theta}(\mbx_t,\mbc)$. The term $\tilde{\me}_{\theta}$ is a heuristically constructed weaker variant that simulates an undesirable score. For example, in CFG, the class condition is dropped, while in PAG, an identity condition is injected to produce undesirable outputs.

Although these approaches have shown empirical success and clearly demonstrate the necessity of modeling undesirable paths, the theoretical justification for how to construct such paths remains limited.


\subsection{Optimal Transport and Sinkhorn-Knopp}
\label{sec:sinkhorn}
In classical discrete optimal transport (OT), given a predefined cost matrix $\mathbf{M} \in \mathbb{R}^{n\times n}$ and two probability vectors $\boldsymbol{\bs{\mu}}, \boldsymbol{\nu} \in \Sigma_n =\{\boldsymbol{p} \in \mathbb{R}^{n}: \boldsymbol{p} \geq 0, \mathbf{1}^{\top}\boldsymbol{p}=1\}$, the OT problem is formulated as:
\begin{align}
d_{\mathbf{M}}(\bs{\bs{\mu}},\bs{\nu}) = \underset{\bs{P}\in \mc{U}(\bs{\bs{\mu}},\bs{\nu})}{\min}\langle\mbf{P},\mbf{M}\rangle,
\end{align}
where $\mc{U}(\bs{\bs{\mu}},\bs{\nu}) = \{\mbf{P} \in \mathbb{R}_{+}^{n \times n} |\mbf{P}\mbf{1}_{n} = \bs{\bs{\mu}},\mbf{P}^{\top}\mbf{1}_{n} = \bs{\nu}\}$ denotes the transport polytope consisting of non-negative matrices with prescribed row and column sums, and $\langle\cdot,\cdot\rangle$ indicates the Frobenius inner product. Directly solving the OT problem incurs a computational cost of $O(n^3\log n)$, which is often prohibitively expensive.

To overcome this, the entropy-regularized OT formulation, solved via the Sinkhorn-Knopp (Sinkhorn) algorithm~\cite{sinkhorn}, is defined as:
\begin{align}
d^{\lambda}_{\mathbf{M}}(\bs{\bs{\mu}},\bs{\nu}) = \underset{\bs{P}\in \mc{U}(\bs{\bs{\mu}},\bs{\nu})}{\min} \langle\mbf{P},\mbf{M}\rangle -\frac{1}{\lambda}\langle\mbf{P},\log \mbf{P}\rangle,
\label{Sinkhorn}
\end{align}
where $\lambda >0$ is the regularization parameter, and the second term is the entropy regularizer. This problem is efficiently solved by iteratively updating scaling vectors. Specifically, at iteration $i \rightarrow \infty$, the optimal transport plan via Sinkhorn, $\texttt{Sinkhorn}(\lambda\mbf{M}):= \mbf{P}^{\ast}$, can be expressed as:
\begin{align}
\mbf{P}^{\ast} = \text{diag}(\bs{u}^{i})\exp(-\lambda \mbf{M})\text{diag}(\bs{v}^{i}),
\label{Sinkhorn2}
\end{align}
where scaling vectors $\bs{u}^{i}$ and $\bs{v}^{i}$ are updated via:
\begin{align}
\bs{u}^{i} = \frac{\boldsymbol{\bs{\mu}}}{\exp(-\lambda \mbf{M})\bs{v}^{i-1}},\quad
\bs{v}^{i} = \frac{\boldsymbol{\nu}}{\exp(-\lambda \mbf{M})\bs{u}^{i}},
\end{align}
with initialization $\bs{v}^{0}=\bs{1}$. To improve numerical stability, we adopt the log-domain scaling version of Sinkhorn optimization~\cite{schmitzer2019stabilized}.

\subsection{Connecting Self-Attention with Sinkhorn}
Given an input sequence $\mathbf{X} = [x_1, x_2, \dots, x_n]$ embedded in a $d$-dimensional space, the self-attention mechanism in Transformer models is defined as:
\begin{align}
\text{SA}(\mbf{Q},\mbf{K},\mbf{V}) &= \texttt{SoftMax}\left(\frac{\mbf{Q}\mbf{K}^{\top}}{\sqrt{d}}\right)\mbf{V}, \label{self-atten}\\
\text{where}\quad \mathbf{Q} = \phi_q(\mathbf{X}), &\;\mathbf{K} = \phi_k(\mathbf{X}), \;\mathbf{V} = \phi_v(\mathbf{X}).\notag
\end{align}
Here, $\mbf{Q}, \mbf{K}, \mbf{V} \in \mathbb{R}^{n \times d}$ denote the query, key, and value matrices obtained via learned linear projections, each corresponding $\phi_q(\cdot)$, $\phi_k(\cdot)$, and $\phi_v(\cdot)$

It has been shown that the first iteration of the Sinkhorn algorithm corresponds exactly to the \texttt{SoftMax}~\cite{sinkformers}, suggesting a natural generalization of attention via optimal transport. Specifically, by interpreting the attention score as a similarity matrix and defining the cost as $\mbf{M} = (\mathbf{1} - \mbf{Q}\mbf{K}^{\top})$, Sinkhorn-based attention (Sink-Attention) optimizes a doubly-stochastic plan that favors high similarity alignments. This leads to more structured and expressive attention maps compared to standard softmax-based attention, and empirically improves performance~\cite{otseg}. The Sink-Attention can be formulated as:
\begin{align}
\text{Sink-A}(\mbf{Q},\mbf{K},\mbf{V}) &= \texttt{Sinkhorn}\left(\lambda\mbf{M}\right)\mbf{V},\label{sink-a}
\end{align}
where \texttt{Sinkhorn}($\cdot$) computes a doubly-stochastic matrix via Eq.\eqref{Sinkhorn2}, with regularization parameter $\lambda$ set to $1/\sqrt{d}$.

\begin{figure*}[ht!]
\centering
\includegraphics[width=0.8\linewidth]{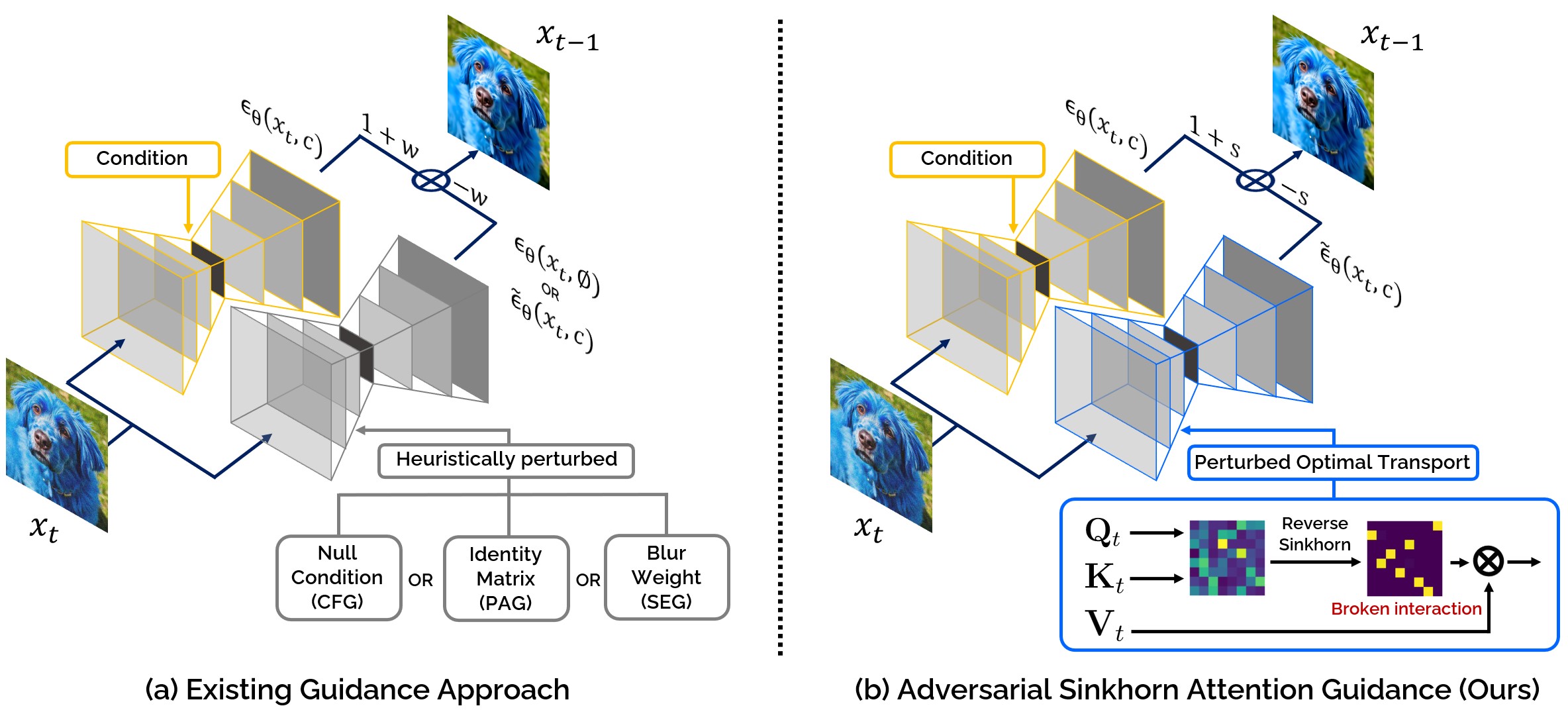}
\vspace{-0.5em}
\caption{Conceptual comparison between ASAG and other guidance methods.
Existing guidance methods often rely on null conditions or heuristic perturbations of self-attention, such as injecting identity matrices or applying Gaussian blurs, to simulate undesirable paths.
In contrast, ASAG explicitly defines an attention cost function based on pixel-level interactions and disrupts attention scores by minimizing this cost, thereby intentionally breaking semantic interactions through the Sinkhorn algorithm.}
\label{fig:concept}
\vspace{-1em}
\end{figure*}

\section{Methods}

\subsection{Self- and Sinkhorn Attention in Diffusion Models}
Recent DMs extensively utilize attention mechanisms throughout their architecture. At each timestep $t$, the model processes features via self-attention, typically expressed as:
\begin{align}
\text{SA}(\mbf{Q}_t,\mbf{K}_t,\mbf{V}_t) &= \texttt{SoftMax}\left(\frac{\mbf{Q}_t\mbf{K}_t^{\top}}{\sqrt{d}}\right)\mbf{V}_t. \label{self-atten-DM}
\end{align}
As introduced in Eq.~\ref{sink-a}, an alternative based on optimal transport is Sinkhorn attention, which replaces the \texttt{Softmax} with \texttt{Sinkhorn} operator which compute a transport plan via iterative optimization:
\begin{align}
\text{Sink-A}(\mbf{Q}_t,\mbf{K}_t,\mbf{V}_t) &= \texttt{Sinkhorn}(\lambda\mbf{M}^{\uparrow}_t)\mbf{V}_t, \label{sink-atten_dm}
\end{align}
where $\lambda = 1/\sqrt{d}$, $\mbf{M}^{\uparrow}_t = (\mathbf{1} - \mbf{Q}_t\mbf{K}_t^{\top})$. Minimizing this cost means that maximize similarity between query and key matrices at $t$. While Sink-A can improve alignment, it is computationally expensive due to its iterative nature. When applied to all attention layers, the overhead becomes even more significant.
However, our goal is not to enhance all attention mechanisms. Instead, within the guidance sampling framework, we aim to construct an effective undesirable path $\tilde{\me}_{\theta}(\mbx_t,\mbc)$ that simulates degraded attention behavior.

\subsection{Adversarial Sinkhorn Attention Guidance}

To construct the undesirable score function $\tilde{\me}_{\theta}(\mbx_t,\mbc)$, we propose Adversarial Sinkhorn Attention Guidance (ASAG), which selectively applies Sink-A in reverse direction to degrade attention quality in specific layers as shown Fig~\ref{fig:concept}. This design is inspired by prior works such as PAG and SEG, which perturb only a subset of attention maps.

In ASAG, we replace cost $\mbf{M}^{\uparrow}_t$ in Eq.~\eqref{sink-atten_dm} with $\mbf{M}_t^{\downarrow} = (\mbf{Q}_t\mbf{K}_t^{\top})$  for direction minimizing similarity between query and key matrices:
\begin{align}
\text{ASA}(\mbf{Q}_t,\mbf{K}_t,\mbf{V}_t) &= \texttt{Sinkhorn}\left(\lambda\mbf{M}_t^{\downarrow}\right)\mbf{V}_t, \label{sink-atten_dm2}
\end{align}
where ASA denotes Adversarial Sinkhorn Attention. Unlike Sink-A with the OT objective that maximize similarity, ASA seeks to minimize interaction between noisy queries and keys, leading to attention collapse and disrupted interactions as detailed Algorithm~\ref{algo-ASA}.

Since degradation does not require precise convergence, only a few Sinkhorn iterations are sufficient, resulting in minimal computational overhead. The resulting attention maps simulate an undesirable path where semantic coherence is lost. Using ASA, we compute $\tilde{\me}_{\theta}(\mbx_t,\mbc)$ in Eq.~\eqref{eq:weak},  and pseudo code are provided in Algorithm~\ref{algo-ASAG}.

\begin{algorithm}[t!]
\caption{Adversarial Sinkhorn Attention}
\label{algo-ASA}
\SetKwInOut{Input}{Input}
\SetKwInOut{Output}{Output}
\SetKwInput{kwInit}{Initialization}

\Input{
    query $\mbf{Q}_t$ , key $\mbf{K}_t$, and value $\mbf{V}_t$ matrices at timestep $t$,
    hyper-parameter $\lambda = 1/\sqrt{d}$, 
    error threshold $\epsilon_{\text{max}}$
}
\kwInit{Attention cost $\mbf{M}_t^{\downarrow} = (\mbf{Q}_t \mbf{K}_t^{\top})$,
    $\Delta_{\bs{v}} = \infty$, $i = 1$, $\bs{v}^0 = 0$
}
\text{Calculate Sinkhorn distance within inner loop} \;
\While{$\Delta_{\bs{v}} < \epsilon_{\text{max}}$}{
    $\bs{u}^i = \log{\bs{\bs{\mu}}} - \log \left[\sum \exp \left(-\lambda \mbf{M}_{t}^{\downarrow} + \bs{v}^{i-1} \right) \right]$ \; 
    $\bs{v}^i = \log{\bs{\nu}} - \log \left[\sum \exp \left(-\lambda (\mbf{M}_{t}^{\downarrow})^{\top} + \bs{u}^{i} \right) \right]$ \;
    $\Delta_{\bs{v}} = \|\bs{v}^i - \bs{v}^{i-1} \|_{1}$ \;
    $i \leftarrow i + 1$ \;
}
\text{$\mbf{P^*}$ =
     $\text{diag}(\exp(\bs{u}^i)) \cdot \exp(-\lambda \mbf{M}_{t}^{\downarrow}) \cdot \text{diag}(\exp(\bs{v}^i))$} \\
\Output{$\text{ASA}(\mbf{Q}_t,\mbf{K}_t,\mbf{V}_t)=\mbf{P}^{*} \mbf{V}_t$}
\end{algorithm}

\begin{algorithm}[t!]
	\caption{Diffusion Sampling with ASAG}
	\label{algo-ASAG}
	\SetKwInOut{Input}{Input}
	\SetKwInput{kwInit}{return}
	\Input{Diffusion model $\me_{\theta}(\mbx_t)$ with self-attention module, $\tilde{\me}_{\theta}(\mbx_t,\mbc)$ with ASA \\
    guidance scale $s$.}
\For{$t$ in $T,T-1, \cdots, 1$ }{
$\me_{\theta}(\mbx_t,\mbc) \leftarrow \me_{\theta}(\mbx_t,\mbc) + s \left( \me_{\theta}(\mbx_t,\mbc) - \tilde{\me}_{\theta}(\mbx_t,\mbc) \right)$ \\
$\hat \mbx_0(t) \leftarrow (\mbx_t - \sqrt{1- \bar{\alpha_t}}\me_{\theta}(\mbx_t,\mbc)) / \sqrt{\bar{\alpha_t}}$ \\
$\mbx_{t-1} \leftarrow \sqrt{\bar{\alpha}_{t-1}} \hat \mbx_0(t)  + \sqrt{1- \bar{\alpha}_{t-1}}\me_{\theta}(\mbx_t,\mbc)$}
\kwInit{$\mbx_{0}$}
\end{algorithm}





\paragraph{Theoretical Justification for ASAG.}
We formally justify ASA’s perturbation as an entropy-maximizing transport plan that disrupts semantic alignment. This direction is not arbitrary but results from a constrained optimization aligned with a disruptive axis in score space.

\begin{thm}[Entropy-Maximizing Plan via Adversarial Sinkhorn]~\label{thm:1}
Let $\mbf{Q}_t, \mbf{K}_t \in \mathbb{R}^{n \times d}$ be the query and key matrices at diffusion timestep $t$. Define the adversarial cost matrix as $\mbf{M}_t^{\downarrow} = (\mbf{Q}_t\mbf{K}_t^{\top})$. The entropy-regularized OT problem is defined as 
\(d^{\lambda}_{\mbf{M}_t^{\downarrow}}(\bs{\bs{\mu}},\bs{\nu}) = \underset{\bs{P}\in \mc{U}(\bs{\bs{\mu}},\bs{\nu})}{\min} \langle\mbf{P},\mbf{M}_t^{\downarrow}\rangle -\frac{1}{\lambda}\langle\mbf{P},\log \mbf{P}\rangle.
\)
Then in the limit $\lambda \to 0$ (i.e., $1/\lambda \to \infty$), the solution converges to the maximum-entropy plan:
\[
\lim_{\lambda \to 0} \mbf{P}_t^* = \frac{1}{n^2} \mathbf{1} \mathbf{1}^\top.
\]
\end{thm}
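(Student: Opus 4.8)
The plan is to show that as $\lambda\to0$ the entropic regularizer overwhelms the linear transport term, so the entropic‑OT minimizer degenerates to the maximum‑entropy element of the feasible polytope $\mc{U}(\bs{\mu},\bs{\nu})$, and then to identify that element as the independent coupling $\bs{\mu}\bs{\nu}^\top$; under the uniform (doubly stochastic) marginals $\bs{\mu}=\bs{\nu}=\tfrac1n\mbf{1}$ used throughout Sinkhorn attention this is exactly $\tfrac1{n^2}\mbf{1}\mbf{1}^\top$. Note that the adversarial cost $\mbf{M}_t^{\downarrow}=\mbf{Q}_t\mbf{K}_t^\top$ washes out of the limit entirely, which is precisely the intended behavior: the perturbed attention forgets pixel‑level similarity.

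First I would make the reduction precise, using the closed form $\mbf{P}_\lambda^*=\text{diag}(\bs{u})\exp(-\lambda\mbf{M}_t^{\downarrow})\text{diag}(\bs{v})$ from Eq.~\eqref{Sinkhorn2}. Since $\mbf{M}_t^{\downarrow}$ has bounded entries, $\exp(-\lambda\mbf{M}_t^{\downarrow})\to\mbf{1}\mbf{1}^\top$ uniformly as $\lambda\to0$, so $\mbf{P}_\lambda^*$ approaches a matrix of the form $\text{diag}(\bs{u})\mbf{1}\mbf{1}^\top\text{diag}(\bs{v})$ constrained to have marginals $\bs{\mu},\bs{\nu}$; equivalently, in the entropic OT objective the $\tfrac1\lambda$ weight makes the entropic term dominate the linear one, which stays bounded on the compact polytope, so the minimizer degenerates to $\argmax_{\mbf{P}\in\mc{U}(\bs{\mu},\bs{\nu})}H(\mbf{P})$ with $H(\mbf{P})=-\langle\mbf{P},\log\mbf{P}\rangle$ (extended continuously to the boundary of the simplex by the convention $0\log0=0$). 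The one genuinely delicate point is this $\lim$–$\argmin$ interchange; I would settle it either by continuity of the Sinkhorn projection in its strictly‑positive kernel argument, or by the standard fact that uniformly convergent convex functions on a fixed compact convex set have their minimizers converge, using strict concavity of $H$ on the affine slice $\mc{U}(\bs{\mu},\bs{\nu})$ so that the limiting maximizer is unique and no selection ambiguity arises.

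Second I would evaluate $\argmax_{\mbf{P}\in\mc{U}(\bs{\mu},\bs{\nu})}H(\mbf{P})$. The clean route is the identity $H(\mbf{P})=H(\bs{\mu})+H(\bs{\nu})-\mathrm{KL}\!\left(\mbf{P}\,\|\,\bs{\mu}\bs{\nu}^\top\right)$, valid for every $\mbf{P}$ with row sums $\bs{\mu}$ and column sums $\bs{\nu}$ and obtained by expanding the KL divergence and collapsing $\sum_j P_{ij}=\mu_i$, $\sum_i P_{ij}=\nu_j$. Since $\mathrm{KL}\ge0$ with equality iff $\mbf{P}=\bs{\mu}\bs{\nu}^\top$ (Gibbs' inequality / strict convexity of $x\log x$), the unique maximizer is the product coupling $\bs{\mu}\bs{\nu}^\top$; a Lagrange‑multiplier computation gives the same answer, as stationarity forces $P_{ij}=e^{-1-a_i-b_j}$, a rank‑one positive matrix that the marginal constraints pin to $\mu_i\nu_j$. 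This also matches the first step: the only matrix of the form $\text{diag}(\bs{u})\mbf{1}\mbf{1}^\top\text{diag}(\bs{v})$ with marginals $\bs{\mu},\bs{\nu}$ is $\bs{\mu}\bs{\nu}^\top$.

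Finally, substituting the uniform marginals $\bs{\mu}=\bs{\nu}=\tfrac1n\mbf{1}$ gives $\bs{\mu}\bs{\nu}^\top=\tfrac1{n^2}\mbf{1}\mbf{1}^\top$, the claimed limit. I expect essentially all the work to sit in the $\lim_{\lambda\to0}$–$\argmin$ interchange: the entropy identity and the substitution of uniform marginals are routine, so the proof ultimately rests on invoking an off‑the‑shelf stability result for entropic OT (or continuity of the Sinkhorn map) rather than on new estimates, and the only other care point is the $0\log0=0$ convention needed to keep $H$ continuous up to the boundary of $\mc{U}(\bs{\mu},\bs{\nu})$.
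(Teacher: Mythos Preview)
Your proposal is correct and follows essentially the same line as the paper: pass to the Sinkhorn closed form $\text{diag}(\bs{u})\exp(-\lambda\mbf{M}_t^{\downarrow})\text{diag}(\bs{v})$, let the kernel collapse to $\mathbf{1}\mathbf{1}^\top$ as $\lambda\to0$, and read off the rank-one limit $\bs{u}\bs{v}^\top=\bs{\mu}\bs{\nu}^\top=\tfrac1{n^2}\mathbf{1}\mathbf{1}^\top$ from the marginal constraints. If anything you are more careful than the paper, which does not address the $\lim$--$\argmin$ interchange you flag (it simply asserts the limiting $\bs{u},\bs{v}$ satisfy the constraints), and your KL-identity route to the entropy maximizer is a clean alternative to the Lagrange-multiplier computation the paper places in its separate Lemma~1.
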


\begin{lemma}[Uniform Plan Maximizes Entropy]\label{lemma:uniform_entropy}
Under uniform marginals \(\bs{\mu} \) and \(\bs{\nu}\), the coupling $\mbf{P}^* = \frac{1}{n^2}\,\mathbf{1}\,\mathbf{1}^\top$
uniquely maximizes the Shannon entropy over the transport polytope \(\mathcal U(\bs{\mu},\bs{\nu})\).
\end{lemma}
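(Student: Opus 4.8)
The plan is to derive the lemma from the nonnegativity of the Kullback--Leibler divergence (Gibbs' inequality), taking the uniform matrix as the reference measure. First I would record what the hypothesis gives us: uniform marginals mean $\bs{\mu}=\bs{\nu}=\tfrac1n\mathbf{1}$, so every $\mbf{P}\in\mathcal{U}(\bs{\mu},\bs{\nu})$ satisfies $\mathbf{1}^\top\mbf{P}\,\mathbf{1}=\mathbf{1}^\top\bs{\mu}=1$; that is, $\mbf{P}$ is itself a probability distribution over the $n^2$ index pairs. In particular the candidate $R:=\mbf{P}^*=\tfrac1{n^2}\mathbf{1}\mathbf{1}^\top$ is feasible, since its row sums and column sums are each $\tfrac1n$.

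The computation is then short. For any feasible $\mbf{P}$,
\[
\begin{aligned}
H(\mbf{P}) &= -\sum_{i,j} P_{ij}\log P_{ij}
= \log n^2 - \sum_{i,j} P_{ij}\log\tfrac{P_{ij}}{R_{ij}}\\
&= \log n^2 - D_{\mathrm{KL}}(\mbf{P}\,\|\,R),
\end{aligned}
\]
where the middle step uses $\log R_{ij}=-\log n^2$ together with $\sum_{i,j}P_{ij}=1$. Since $D_{\mathrm{KL}}(\mbf{P}\,\|\,R)\ge 0$ with equality if and only if $\mbf{P}=R$, we obtain $H(\mbf{P})\le\log n^2$ with equality precisely at $\mbf{P}=\mbf{P}^*$, which establishes both maximality and uniqueness in one stroke.

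As an alternative derivation --- and to connect with the Sinkhorn scaling form in Eq.~\eqref{Sinkhorn2} --- I would instead set up the Lagrangian with multipliers $\{a_i\}$ and $\{b_j\}$ for the row and column constraints. Stationarity yields $P_{ij}=\exp(-1-a_i-b_j)=u_i v_j$, so every interior critical point is a positive rank-one matrix; imposing uniform marginals forces $u_i$ to be constant in $i$ and $v_j$ constant in $j$, and normalization then pins down $P_{ij}=1/n^2$. Strict concavity of $x\mapsto -x\log x$, hence of $H$, on the convex polytope $\mathcal{U}(\bs{\mu},\bs{\nu})$ upgrades this unique stationary point to the unique global maximizer.

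I do not expect a genuine obstacle here: the lemma is elementary once the KL identity is in hand. The only points deserving a word of care are (i) verifying that $R$ indeed lies in $\mathcal{U}(\bs{\mu},\bs{\nu})$, which is immediate from the uniform marginals; (ii) handling entries with $P_{ij}=0$ via the convention $0\log 0=0$, so that $H$ is continuous and strictly concave on the whole polytope; and (iii) invoking compactness of $\mathcal{U}(\bs{\mu},\bs{\nu})$ to guarantee existence of a maximizer before appealing to strict concavity for uniqueness --- a step the KL argument bypasses entirely, since it delivers the bound and its equality case directly.
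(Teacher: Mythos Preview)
Your proposal is correct. Your primary argument via the identity $H(\mbf{P})=\log n^2 - D_{\mathrm{KL}}(\mbf{P}\,\|\,R)$ together with Gibbs' inequality is a genuinely different route from the paper, which proceeds exclusively by the Lagrangian computation you offer as your \emph{alternative}: the paper introduces multipliers $\{\alpha_i\},\{\beta_j\}$, derives the rank-one stationarity form $P_{ij}=u_iv_j$, solves for $u_i=v_j=1/n$ from the marginals, and then invokes strict concavity of $H$ to conclude uniqueness. Your KL approach is more self-contained in that it delivers the sharp upper bound $\log n^2$ and the equality case simultaneously, sidestepping any separate appeal to existence (compactness) or uniqueness (strict concavity); the paper's Lagrangian route, by contrast, makes the connection to the Sinkhorn scaling structure $\mbf{P}=\mathrm{diag}(\bs{u})\,K\,\mathrm{diag}(\bs{v})$ more explicit, which is thematically useful given the surrounding discussion. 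Since you already include the Lagrangian derivation as a secondary path, your write-up effectively subsumes the paper's proof.
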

All proofs are deferred in supplement.
\begin{remark}
By Theorem~\ref{thm:1} and Lemma~\ref{lemma:uniform_entropy}, the adversarial Sinkhorn plan converges to the maximum-entropy uniform coupling as \(\lambda \to 0\), effectively diminishing semantic preferences. This leads to an increasingly unstructured attention map, representing a limiting case of semantic alignment degradation. ASA leverages this behavior to construct an adversarial attention map that systematically disrupts semantic correspondence.
\end{remark}

\begin{corollary} Let $\delta_t := \me_{\theta}(\mbx_t,\mbc) - \tilde{\me}_{\theta}(\mbx_t,\mbc)$ be the guidance energy between the original and ASA-induced score estimates in Eq.~\ref{eq:weak}. Then  $\delta_t$ defines a semantically grounded direction in score space—anchored to the original attention and diverging from the entropy-maximizing adversarial trajectory. Unlike heuristic perturbations, it enables principled contrastive guidance by preserving structural semantics while deliberately reducing alignment.
\end{corollary}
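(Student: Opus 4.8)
The plan is to convert the two informal clauses of the corollary—``$\delta_t$ is anchored to the original attention and diverges from the entropy-maximizing adversarial trajectory'' and ``unlike heuristic perturbations, it enables principled contrastive guidance''—into statements about the algebraic form of $\delta_t$, using only the limit established in Theorem~\ref{thm:1} and the uniqueness in Lemma~\ref{lemma:uniform_entropy}.

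First I would freeze every part of the model except the self-attention layers that ASAG perturbs, and write $\met(\mbx_t,\mbc)$ so that each such layer emits $\mbf{P}^{\text{SA}}\mbf{V}_t$ with $\mbf{P}^{\text{SA}}=\texttt{SoftMax}(\mbf{Q}_t\mbf{K}_t^{\top}/\sqrt d)$, while $\tilde{\met}(\mbx_t,\mbc)$ emits $\mbf{P}^{\text{ASA}}\mbf{V}_t$ with $\mbf{P}^{\text{ASA}}=\texttt{Sinkhorn}(\lambda\mbf{M}_t^{\downarrow})$. Linearizing the frozen post-attention sub-network about $\mbf{P}^{\text{SA}}\mbf{V}_t$—exact when that sub-network is affine in the perturbed activation, first order otherwise—gives $\delta_t=\mathcal{J}_t\big[(\mbf{P}^{\text{SA}}-\mbf{P}^{\text{ASA}})\mbf{V}_t\big]$ for a fixed linear map $\mathcal{J}_t$ independent of the perturbation. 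So the entire ``content'' of the guidance direction is carried by $\Delta\mbf{P}_t:=\mbf{P}^{\text{SA}}-\mbf{P}^{\text{ASA}}$ applied to the values, and it suffices to characterize that object.

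Next I would invoke Theorem~\ref{thm:1} and Lemma~\ref{lemma:uniform_entropy}: after rescaling the marginals so that $\mbf{P}^{\text{ASA}}$ is row-stochastic and hence directly comparable with $\mbf{P}^{\text{SA}}$, the adversarial plan converges as $\lambda\to 0$ to $\tfrac1n\mathbf{1}\mathbf{1}^{\top}$, the \emph{unique} maximum-entropy coupling, so $\mbf{P}^{\text{ASA}}\mbf{V}_t\to\mathbf{1}\bar{\mbf{v}}_t^{\top}$ with $\bar{\mbf{v}}_t$ the mean value vector—an output carrying no residual dependence on the similarities $\mbf{Q}_t\mbf{K}_t^{\top}$. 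Hence $\Delta\mbf{P}_t\mbf{V}_t=\mbf{P}^{\text{SA}}\mbf{V}_t-\mathbf{1}\bar{\mbf{v}}_t^{\top}$ is exactly the softmax attention output with its query-invariant (mean-pooling) component subtracted off, i.e.\ the part of the attention signal attributable to semantic alignment; this is the formal meaning of ``semantically grounded''. The ``anchored yet diverging'' clause then follows by reading $\delta_t=\met-\tilde{\met}$ as the displacement \emph{from} the score generated by the maximally-unstructured attention \emph{to} the score generated by intact attention—equivalently $-\delta_t$ points along the entropy-maximizing axis singled out by the OT problem—so that adding $s\,\delta_t$ in Eq.~\eqref{eq:weak} strictly increases the semantic-alignment component identified above. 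For the last clause I would use uniqueness in Lemma~\ref{lemma:uniform_entropy}: the anti-target of ASAG, $\mbf{P}^{\text{ASA}}$, is the unique solution of the adversarial entropy-regularized OT problem, whereas PAG's identity injection and SEG's Gaussian-blurred map solve no variational problem and generically differ from $\tfrac1n\mathbf{1}\mathbf{1}^{\top}$, so their analogues of $\delta_t$ retain an uncontrolled mixture of query-dependent structure and are therefore not purely contrastive along the semantic axis—the sense in which ASAG's guidance is ``principled''.

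The main obstacle is the leap in the second step from ``perturb the attention matrix'' to ``$\delta_t$ is a fixed linear image of $\Delta\mbf{P}_t\mbf{V}_t$'': the map $\met\mapsto$ output is nonlinear (normalization layers, residual nonlinearities, downstream attention blocks), so the clean identity holds only in a linear-response regime or to first order. I would handle this by stating the result for the linearization and remarking that ASAG's own design—perturbing only a few layers with only a few Sinkhorn iterations—keeps $\Delta\mbf{P}_t$ small, so the first-order term dominates. A secondary, purely bookkeeping point is the mismatch between the OT transport polytope (row sums $1/n$) and an attention matrix (row sums $1$); I would dispose of it with a remark that the uniform rescaling only scales $\mathcal{J}_t$ and hence leaves the \emph{direction} of $\delta_t$ unchanged.
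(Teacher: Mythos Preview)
The paper offers no proof of this corollary. In both the main text and the supplement, only Theorem~\ref{thm:1} and Lemma~\ref{lemma:uniform_entropy} receive formal arguments; the corollary is stated immediately after the Remark as a purely interpretive consequence and is never revisited. Its content---``semantically grounded direction,'' ``principled contrastive guidance,'' ``preserving structural semantics''---is not given a precise mathematical meaning anywhere in the paper, so there is nothing against which to check your derivation step by step.

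Your proposal is therefore not a reconstruction of the paper's argument but an independent attempt to supply one. The formalization you choose---freeze the un-perturbed layers, linearize the post-attention map to write $\delta_t=\mathcal J_t\bigl[(\mbf P^{\mathrm{SA}}-\mbf P^{\mathrm{ASA}})\mbf V_t\bigr]$, then invoke Theorem~\ref{thm:1} to identify the limit of $\mbf P^{\mathrm{ASA}}\mbf V_t$ with the mean-pooled values---is a reasonable way to give the informal language algebraic content, and it goes considerably further than anything the paper actually does. The two caveats you already flag (the linearization holds only to first order in $\Delta\mbf P_t$, and the row-sum normalization mismatch is cosmetic) are the right ones. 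A third worth noting is that ASAG is run at fixed $\lambda=1/\sqrt d$, not in the $\lambda\to 0$ regime, so the clean mean-pooling identification is an idealization that the paper itself softens in its ``Practical Justification'' paragraph. None of this constitutes a gap in your plan so much as a reminder that the corollary, as written, is a heuristic summary rather than a theorem admitting a rigorous proof.
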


\paragraph{Practical Justification via Sinkhorn Approximation.}
While the uniform plan $\frac{1}{n^2}\mathbf{1} \mathbf{1}^\top$ represents the theoretical extreme of semantic disruption, applying it directly often leads to reduced generation diversity and unstable behavior. ASA instead adopts a Sinkhorn plan with a small but finite $\lambda$, which retains a doubly stochastic structure and enables controlled entropy increase while preserving attention stability. Unlike heuristic perturbations, this approach maintains the optimization structure of attention with a reversed objective, offering a principled and tunable guidance strategy grounded in theory and robust in practice.

\begin{figure*}[t!]
\centering
\includegraphics[width=0.9\linewidth]{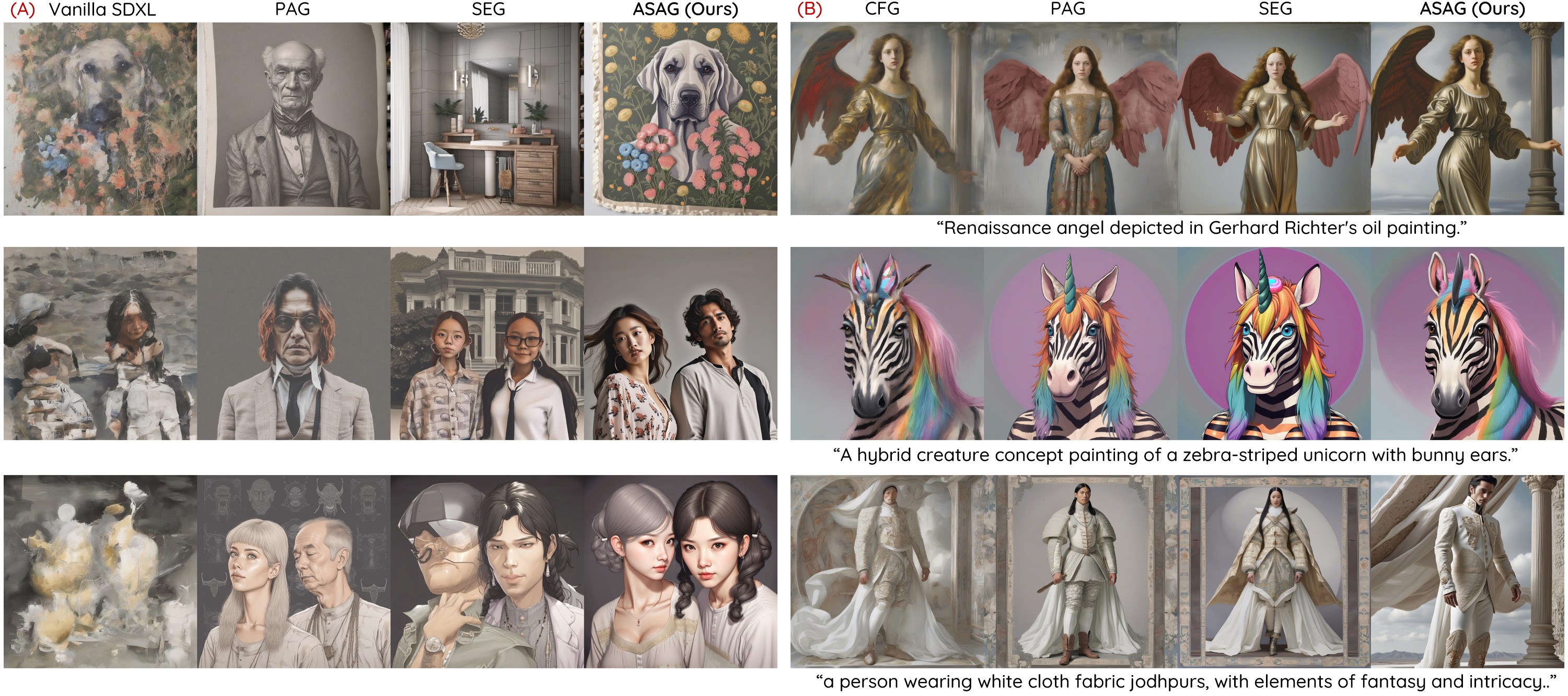}
\vspace{-0.5em}
\caption{Comparison results on (A) unconditional and (B) conditional generation using Vanilla, CFG, PAG, SEG, and ASAG. While other guidance methods often alter the structure of the original outputs, ASAG achieves both higher visual quality and stronger consistency in structure and intent.}
\label{fig:uncond}
\vspace{-0.5em}
\end{figure*}

\begin{figure*}[ht!]
\centering
\includegraphics[width=0.9\linewidth]{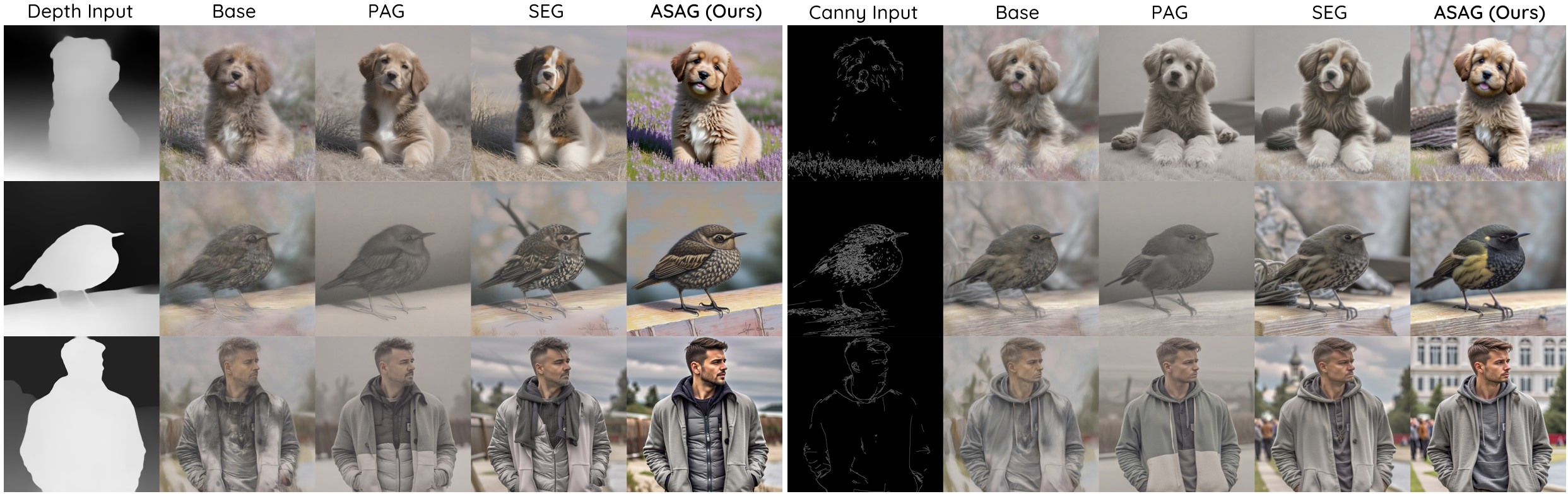}
\vspace{-0.5em}
\caption{ControlNet examples with different guidance sampling methods.
Left: Canny condition; Right: Depth condition.
Our method, when integrated with ControlNet, substantially improves visual quality and preserves fine-grained image details.}
\label{fig:control_1}
\vspace{-0.5em}
\end{figure*}

\begin{figure}[ht!]
\centering
\includegraphics[width=0.95\linewidth]{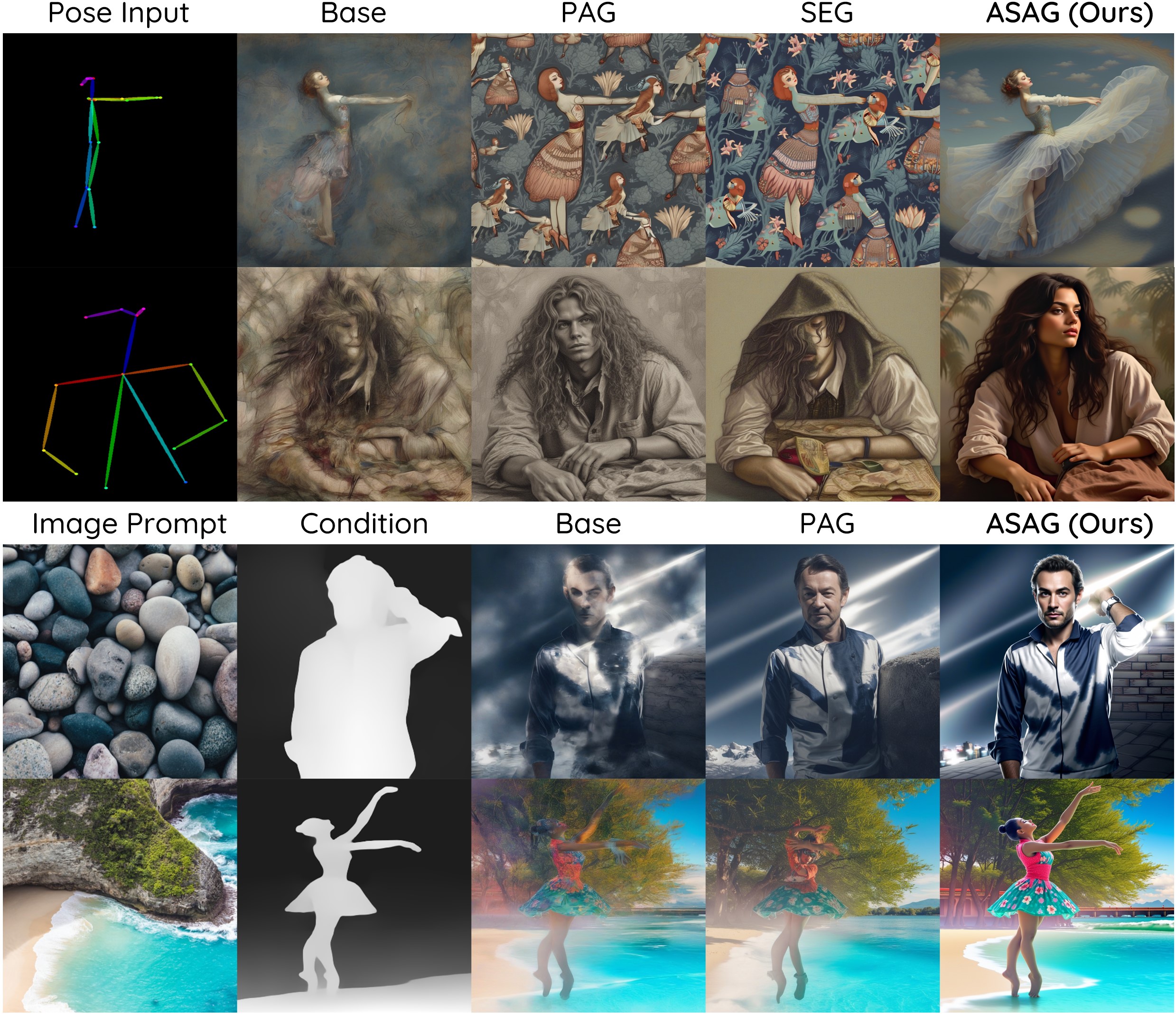}
\vspace{-0.5em}
\caption{Comparison of guidance sampling methods combined with ControlNet and IP-Adapter under pose and depth conditions. Our method significantly enhances image quality, yielding clearer structures.}
\label{fig:control_2}
\vspace{-1em}
\end{figure}

\section{Experiments}
\paragraph{Setup.} For fair comparison, we use SDXL\cite{sdxl} and SD3~\cite{stable3}. We compare against PAG and SEG with a guidance scale of 3.0, following their official settings. SEG is not officially supported on SD3 and IP-Adapter, so it is omitted in those cases. For ASAG, we set the guidance scale $s = 1.5$ and use 25 sampling steps. Full implementation details are provided in the supplement. All experiments are run on a single NVIDIA H100 GPU.

\paragraph{Evaluation Metrics.} We evaluate our method across multiple dimensions. For visual quality, we report the Fréchet Inception Distance (FID)~\cite{fid} and Kernel Inception Distance (KID), and Inception score (Diversity) using 30K randomly sampled prompts from the MS-COCO validation set~\cite{coco}. To assess text-image alignment and human preference, we use CLIPScore~\cite{clipscore}, ImageReward~\cite{imagereward}, PickScore~\cite{pick}, and Human Preference Score (HPS v2.1)~\cite{hpsv2}. Evaluations are conducted on prompts from both MS-COCO~\cite{coco} and additional benchmarks including DrawBench~\cite{drawbench} and HPD~\cite{hpsv2}. Further evaluation details are provided in the supplement.

\begin{table}[t!]
\caption{Quantitative results of various guidance methods on the MS-COCO dataset with SDXL and SD3 in uncoditional and conditional generation. Bold text indicates the best performance for each metric. }
\vspace{-0.5em}
\label{tab_main}
\resizebox{1.0\linewidth}{!}{
\begin{small}
\begin{tabular}{ccccc}
\toprule
 Condition & Method & \multicolumn{1}{c}{FID~$\downarrow$} & \multicolumn{1}{c}{KID~$\downarrow$} & \multicolumn{1}{c}{Inception Score~$\uparrow$} \\
\cmidrule(lr){1-1}\cmidrule(lr){2-2}\cmidrule(lr){3-5}
\multirow{4}{*}{\shortstack{Unconditional \\ Generation}} &Vanilla & 122.07 & 0.086 & 7.052  \\
\cmidrule(lr){2-2}\cmidrule(lr){3-5}
& PAG & 108.63 & 0.067 & 10.46 \\
\cmidrule(lr){2-2}\cmidrule(lr){3-5}
& SEG & 95.43 & 0.062 & 10.35 \\
\cmidrule(lr){2-2}\cmidrule(lr){3-5}
\rowcolor{green!10}  \cellcolor{white} 
&\textbf{ASAG (Ours)} & \bf 92.01 & \bf 0.059  & \bf 10.54  \\
\midrule
 Condition & Method & \multicolumn{1}{c}{FID~$\downarrow$} & \multicolumn{1}{c}{CLIPScore~$\uparrow$} & \multicolumn{1}{c}{ImageReward~$\uparrow$} \\
\cmidrule(lr){1-1}\cmidrule(lr){2-2}\cmidrule(lr){3-5}
\multirow{4}{*}{\shortstack{Conditional \\ Generation (SDXL)}} & CFG & 28.15 & 25.21 & 0.415 \\
\cmidrule(lr){2-2}\cmidrule(lr){3-5}
& PAG & 24.32 & 25.41 & 0.448 \\
\cmidrule(lr){2-2}\cmidrule(lr){3-5}
& SEG & 26.80 & 25.39 & 0.431 \\
\cmidrule(lr){2-2}\cmidrule(lr){3-5}
\rowcolor{green!10}  \cellcolor{white} &
\textbf{ASAG (Ours)} & \bf 23.30 & \bf 25.85  & \bf 0.459  \\       
\midrule
 Condition & Method & \multicolumn{1}{c}{FID~$\downarrow$} & \multicolumn{1}{c}{CLIPScore~$\uparrow$} & \multicolumn{1}{c}{ImageReward~$\uparrow$} \\
\cmidrule(lr){1-1}\cmidrule(lr){2-2}\cmidrule(lr){3-5}
\multirow{4}{*}{\shortstack{Conditional \\ Generation (SD3)}} & CFG & 24.19 & 26.03 & 0.931 \\
\cmidrule(lr){2-2}\cmidrule(lr){3-5}
& PAG & 23.31 & 26.14 & 0.956 \\
\cmidrule(lr){2-2}\cmidrule(lr){3-5}
\cmidrule(lr){2-2}\cmidrule(lr){3-5}
\rowcolor{green!10}  \cellcolor{white} &
\textbf{ASAG (Ours)} & \bf 22.87 & \bf 26.33  & \bf 0.978  \\      
\bottomrule
\end{tabular}
\end{small}
}

\vspace{-0.5em}
\end{table}

\begin{table}[t!]
\caption{Quantitative comparison of text alignment and human preference across datasets using various guidance methods with SDXL. For PAG, SEG, and ASAG, CFG guidance is used jointly. }
\label{tab_hps}
\vspace{-0.5em}
\resizebox{1\linewidth}{!}{
\begin{small}
\begin{tabular}{clcccc}
\toprule
Dataset & Method & CLIPScore~$\uparrow$ & PickScore~$\uparrow$ & ImageReward~$\uparrow$ & HPSv2~$\uparrow$ \\
\cmidrule(r){1-1}\cmidrule(r){2-2}\cmidrule(r){3-6}
\multirow{4}{*}{Drawbench} & CFG & 25.61 & 21.70 & 0.196 & 26.81 \\
\cmidrule(r){2-2}\cmidrule(r){3-6}
 & PAG & 26.17 & 21.93 & 0.294 & 26.84 \\
\cmidrule(r){2-2}\cmidrule(r){3-6}
 & SEG & 26.03 & 21.78 & 0.290 & 27.06 \\
\cmidrule(r){2-2}\cmidrule(r){3-6}
\rowcolor{green!10}  \cellcolor{white} & \textbf{ASAG (Ours)}  & \bf 26.62  & \bf21.99  & \bf0.316 & \bf27.08 \\
\midrule
\multirow{4}{*}{HPD} & CFG & 27.88 & 21.97 & 0.565 & 26.63 \\
\cmidrule(r){2-2}\cmidrule(r){3-6}
\cmidrule(r){2-6}
 & PAG & 28.00 & 22.12 & 0.635 & 28.83 \\
\cmidrule(r){2-2}\cmidrule(r){3-6}
\cmidrule(r){2-6}
 & SEG & 28.15 & 21.96 & 0.622 & 28.73 \\
\cmidrule(r){2-2}\cmidrule(r){3-6}
\rowcolor{green!10}  \cellcolor{white} & \textbf{ASAG (Ours)} & \bf 28.58   & \bf 22.21  & \bf 0.673  & \bf 28.84  \\
\bottomrule	        
\end{tabular}
\end{small}
}
\vspace{-1em}
\end{table}

\subsection{Results on Diffusion Generation}

To rigorously evaluate the effectiveness of our method, we generate 30K samples on the MSCOCO dataset using various guidance sampling techniques in both unconditional and conditional generation settings.

\paragraph{Unconditional Generation}
To isolate the effect of our method, we first evaluate ASAG in an unconditional generation setup.
As shown in Tab.~\ref{tab_main}, ASAG consistently outperforms other guidance approaches across all evaluation metrics, demonstrating improved visual fidelity and greater diversity, as reflected by higher Inception Scores.
Qualitatively, ASAG also produces outputs that are more visually appealing and better aligned with the original vanilla results, particularly in cases where other guidance methods generate high-quality images that nonetheless diverge significantly from the vanilla outputs (Fig.~\ref{fig:uncond}).
These findings suggest that ASAG serves as a strong guidance strategy even in scenarios without any conditional information.

\paragraph{Conditional Generation}  
We further evaluate our method in a conditional generation setup, where guidance sampling is combined with CFG. As shown in Tab.~\ref{tab_main}, while existing guidance methods benefit from CFG, ASAG achieves superior performance in both visual quality and text-image alignment. It is also compatible with the SD3 backbone and consistently outperforms other methods, demonstrating strong generalizability. To further validate its effectiveness, we evaluate on a human preference dataset (Tab.~\ref{tab_hps}), where ASAG with CFG achieves state-of-the-art performance across all metrics. Qualitative results in Fig.~\ref{fig:uncond} show that ASAG enhances visual quality while preserving the structural intent of the original CFG output, unlike other methods that often alter the generation semantics.

\subsection{Results on Downstream Tasks}
To evaluate the effectiveness of ASAG in downstream tasks, we conduct experiments with ControlNet and IP-Adapter under various conditioning settings, keeping all configurations identical except for the guidance method.
As shown in Fig.~\ref{fig:control_1}, ASAG consistently outperforms baselines under Canny and depth conditions by better preserving structural fidelity. In more challenging setups—pose with ControlNet and multimodal IP-Adapter (Fig.~\ref{fig:control_2})—ASAG captures fine-grained details and maintains stronger visual coherence, even where other methods degrade.

Notably, these improvements are achieved without any additional training or fine-tuning, demonstrating the plug-and-play flexibility of ASAG. Its principled perturbation strategy effectively guides pretrained models along optimal generation paths, ensuring robust performance across diverse conditional settings. Additional qualitative results are provided in the supplement.

\section{Ablation Study}

\paragraph{Optimal Transport Cost.}  
Rather than maximizing similarity by setting the cost as $\mbf{M}_t = 1 - \mbf{Q}_t \mbf{K}_t^\top$, ASAG defines the cost as $\mbf{M}_t = \mbf{Q}_t \mbf{K}_t^\top$, which explicitly minimizes similarity and disrupts semantic alignment. As shown in Tab.~\ref{tab_cost}, the similarity-maximizing variant also improves over vanilla sampling, likely due to extrapolation effects similar to CFG, but incurs higher inference time due to the increased complexity of the Sinkhorn optimization.

We further evaluate the extreme case using the uniform plan $\frac{1}{n^2}\mathbf{1}\mathbf{1}^\top$, which represents a theoretical upper bound on semantic disruption. Interestingly, while it shows performance improvement over the baseline, we observe a reduction in sample diversity. In contrast, our Sinkhorn-based formulation achieves a similar degree of semantic disruption with better diversity and stability, validating its effectiveness as a practical and principled guidance strategy.

\paragraph{Computational Complexity.}  
We further assess the efficiency of ASAG by measuring inference time and memory usage, as shown in Tab.~\ref{tab_infer}. As described in Algorithm~\ref{algo-ASA}, the Sinkhorn process includes early stopping based on a convergence threshold. In practice, only 2 iterations are sufficient in most cases to produce stable transport plans. Remarkably, using just 2 iterations increases inference time by only $+0.35$ seconds while even improving generation quality, confirming that ASAG is not only efficient but also effective under minimal Sinkhorn updates.

\begin{table}[t!]
\caption{Ablation study on transport cost for Sinkhorn. }
\vspace{-0.5em}
\label{tab_cost}
\centering
\resizebox{0.95\linewidth}{!}{
\begin{small}
\begin{tabular}{lcccc}
\toprule
Method & FID & KID & Inception Score & Sinkhorn Iteration \\
\cmidrule(r){1-1}\cmidrule(r){2-5}\
Vanilla & 122.07 & 0.086 & 7.052 & - \\
$\mbf{M}_t$ = $1-\mbf{Q}\mbf{K}^{\top}$ & 111.53 & 0.078 & 9.085 &  $\approx$ 10 \\
$\mbf{P}_t^*$ = $\frac{1}{n}\mathbf{1}\mbf{1}^{\top}$ & {92.11} & \bf{0.058} & {9.710} & - \\
\cmidrule(r){1-1}\cmidrule(r){2-5}\
$\mbf{M}_t$ = $\mbf{Q}\mbf{K}^{\top}$ & \bf{92.01} & {0.059} & \bf{10.54} & $\approx$ 2 \\
\bottomrule	        
\end{tabular}
\end{small}
}
\end{table}

\begin{table}[t!]
\caption{Comparison computation cost with various approaches. Inference time is measured per prompt.}
\vspace{-0.5em}
\label{tab_infer}
\centering
\resizebox{0.95\linewidth}{!}{
\begin{small}
\begin{tabular}{ccccc}
\toprule
 Method & CFG & PAG & SEG & Ours  \\
\cmidrule(r){1-1}\cmidrule(r){2-4}\cmidrule(r){5-5}
Inference Time (sec) $\downarrow$ & 1.198 & 1.280 & 1.513 & 1.551 \scriptsize{\textcolor{red}{(+0.35)}} \\
Memory (G) $\downarrow$ & 16.41 & 16.46 & 16.61 &  16.61 \scriptsize{\textcolor{red}{(+0.20)}}\\
\bottomrule	        
\end{tabular}
\end{small}
}
\end{table}

\begin{figure}[t!]
\centering
\includegraphics[width=0.9\linewidth]{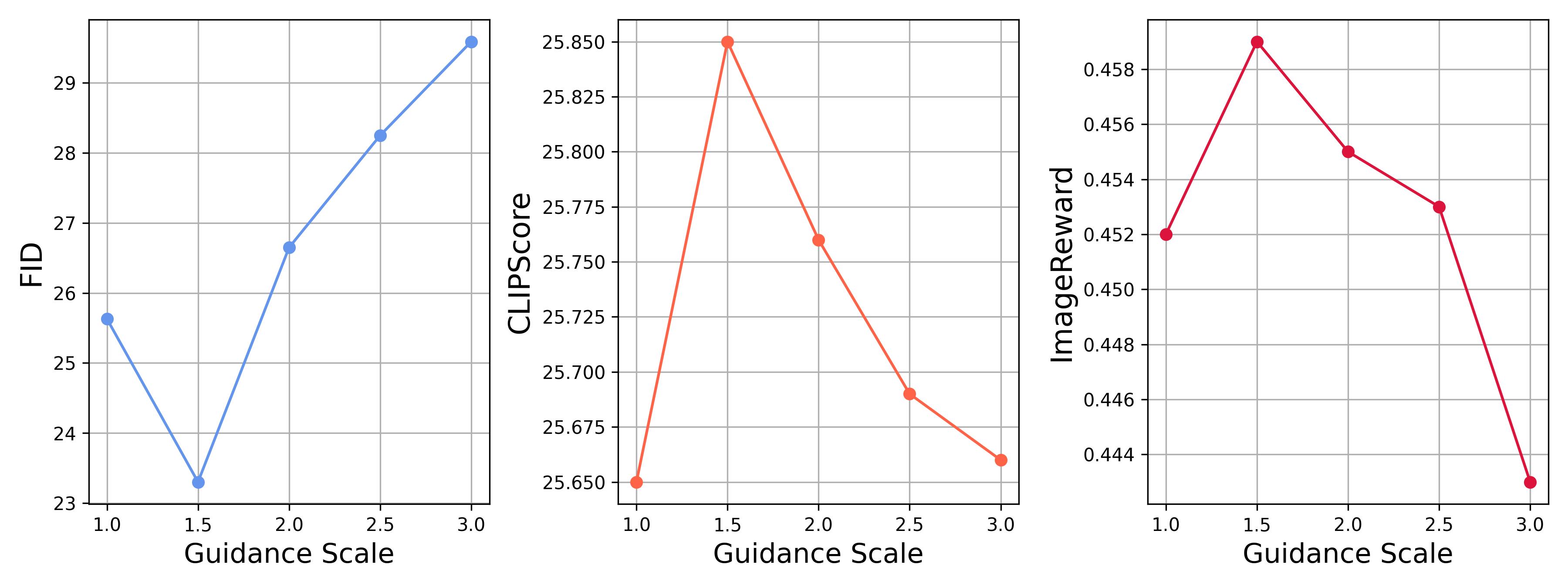}
\vspace{-1em}
\caption{Analysis of guidance scale across various metrics using our method.}
\label{fig:guidance_scale}
\vspace{-1em}
\end{figure}

\paragraph{Guidance Scale}  
To determine the optimal guidance scale $s$ in ASAG, we conduct experiments by varying $s$ and measuring the impact on generation performance, as shown in Figure~\ref{fig:guidance_scale}. We observe that performance improves with increasing $s$ up to a certain point, but overly large values cause slightly degradation as observed in CFG cases. The peak performance is achieved at $s = 1.5$, which we adopt as the default configuration throughout our experiments.

\section{Conclusion}

In this work, we propose \textbf{Adversarial Sinkhorn Attention Guidance (ASAG)}, a novel guidance sampling approach designed to address the lack of theoretical foundation behind existing perturbed self-attention methods for generating undesirable paths. Rather than relying on heuristic perturbations, we define an \emph{attention cost} that reflects interactions between pixel embeddings, and disrupt those interactions via the Sinkhorn algorithm—framing the process through the lens of optimal transport.

Our method is theoretically grounded and demonstrates state-of-the-art performance across various real-world scenarios, including both unconditional and conditional generation. Furthermore, we show that ASAG is highly compatible with downstream frameworks such as ControlNet and IP-Adapter, achieving significant improvements over existing guidance methods without requiring any additional training.

We believe ASAG not only provides a principled and effective guidance mechanism, but also offers both theoretical insight and practical utility within the diffusion generation framework. Our results suggest that ASAG can generalize to a wide range of generative tasks and pave the way for future research in attention-based guidance strategies.

\label{sec:reference_examples}

\clearpage
\appendix
\onecolumn
\setcounter{thm}{0}
\setcounter{lemma}{0}

\begin{center}
\Large \textbf{{Toward the Frontiers of Reliable Diffusion Sampling
\\via Adversarial Sinkhorn Attention Guidance}}\\
     {Supplementary Material}
\end{center}

In this supplementary document, we present the following:
\begin{itemize} 
\item Detailed description of the evaluation metrics and implementation. 
\item The proof of Theorem 1 and Lemma 1. 
\item Additional qualitative results, including unconditional and conditional case with SDXL and SD3, further ablation studies.
\end{itemize}

\section{Metrics and Implementation Detail} \label{sec:metric_detail}
For the results in Table~\ref{tab_main}, we conduct unconditional image sampling using 30,000 samples generated with SDXL. For the conditional case, we randomly select text prompts from the MSCOCO validation set. To evaluate the robustness of guidance scales, we perform conditional generation using Classifier-Free Guidance (CFG) with scale values uniformly sampled from the range (3, 5). For both PAG and SEG, we fix the scale to 3.0, following the settings recommended in their respective papers.

For Table~\ref{tab_hps}, we sample 200 prompts from DrawBench~\cite{drawbench} and 400 prompts from HPD~\cite{hpsv2}, generating 5 images per prompt. Additionally, for the ablation study in Figure~\ref{fig:guidance_scale}, we generate 5,000 images from the MSCOCO validation set using CFG-based sampling.

\section{Proof of Theorem 1}
\begin{thm}[Entropy-Maximizing Plan via Adversarial Sinkhorn]
Let $\mbf{Q}_t, \mbf{K}_t \in \mathbb{R}^{n \times d}$ be the query and key matrices at diffusion timestep $t$. Define the adversarial cost matrix as $\mbf{M}_t^{\downarrow} = (\mbf{Q}_t\mbf{K}_t^{\top})$. The entropy-regularized OT problem is defined as 
\(d^{\lambda}_{\mbf{M}_t^{\downarrow}}(\bs{\bs{\mu}},\bs{\nu}) = \underset{\bs{P}\in \mc{U}(\bs{\bs{\mu}},\bs{\nu})}{\min} \langle\mbf{P},\mbf{M}_t^{\downarrow}\rangle -\frac{1}{\lambda}\langle\mbf{P},\log \mbf{P}\rangle.
\)
Then in the limit $\lambda \to 0$ (i.e., $1/\lambda \to \infty$), the solution converges to the maximum-entropy plan:
\[
\lim_{\lambda \to 0} \mbf{P}_t^* = \frac{1}{n^2} \mathbf{1} \mathbf{1}^\top.
\]
\begin{proof}
The entropy‐regularized OT problem admits the closed‐form Sinkhorn solution~\cite{sinkhorn}:
\[
\mbf{P}^* \;=\;\mathrm{diag}(\bs{u})\;K\;\mathrm{diag}(\bs{v}),
\quad
K_{ij} = \exp\bigl(-\lambda\,\mbf{M}_{t,ij}^\downarrow\bigr),
\]
with scaling vectors \(\bs{u},\bs{v}>0\) chosen so that \(\mbf{P}^*\mathbf1=\bs{\bs{\mu}}\) and \((\mbf{P}^*)^\top\mathbf1=\bs{\nu}\).

As \(\lambda\to0\), we have \(K_{ij}=\exp(-\lambda\,\mbf{M}_{t,ij}^\downarrow)\to1\), hence
\[
P^*
=\mathrm{diag}(\bs{u})\,K\,\mathrm{diag}(\bs{v})
\;\longrightarrow\;
\mathrm{diag}(\bs{u})\,\mathbf1\mathbf1^\top\,\mathrm{diag}(\bs{v})
=\;\bs{u}\,\bs{v}^\top.
\]
Under uniform marginals \(\bs{\mu}=\bs{\nu}=\tfrac1n\mathbf1\), the constraints
\[
\mbf{P}^*\mathbf1 = \bs{u}\,(\bs{v}^\top\mathbf1) = \tfrac1n\mathbf1,
\qquad
(P^*)^\top\mathbf1 = \bs{v}\,(\bs{u}^\top\mathbf1) = \tfrac1n\mathbf1
\]
force \(\bs{u}=\tfrac1n\mathbf1\) and \(\bs{v}=\tfrac1n\mathbf1\). 

Therefore,
\[
\lim_{\lambda\to0}\mbf{P}^*
= \bs{u}\,\bs{v}^\top
= \frac1n\mathbf1\;\bigl(\frac1n\mathbf1\bigr)^\top
= \frac1{n^2}\,\mathbf1\,\mathbf1^\top,
\]
as claimed.
\end{proof}
\end{thm}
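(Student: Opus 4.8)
The plan is to combine the closed-form Sinkhorn factorization of the entropy-regularized optimum with the marginal constraints, and then to upgrade the trivial pointwise limit of the Gibbs kernel to a limit of the entire transport plan. Throughout I take $\bs{\mu}=\bs{\nu}=\tfrac1n\mathbf{1}$, the uniform marginals for which the stated limit makes sense. First I would invoke Eq.~\eqref{Sinkhorn2}: the minimizer of $d^{\lambda}_{\mbf{M}_t^{\downarrow}}(\bs{\mu},\bs{\nu})$ has the form $\mbf{P}^*_\lambda = \mathrm{diag}(\bs{u}_\lambda)\,K_\lambda\,\mathrm{diag}(\bs{v}_\lambda)$ with $(K_\lambda)_{ij}=\exp(-\lambda\,\mbf{M}^{\downarrow}_{t,ij})$ and positive scaling vectors $\bs{u}_\lambda,\bs{v}_\lambda$ determined (up to the scalar gauge $\bs{u}\mapsto c\bs{u},\ \bs{v}\mapsto \bs{v}/c$) by $\mbf{P}^*_\lambda\mathbf{1}=\bs{\mu}$ and $(\mbf{P}^*_\lambda)^\top\mathbf{1}=\bs{\nu}$. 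Since $\mbf{Q}_t\mbf{K}_t^\top$ has finite entries, $K_\lambda\to\mathbf{1}\mathbf{1}^\top$ uniformly as $\lambda\to0$.

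Second, I would identify the limiting object. If $\bs{u}_\lambda\to\bs{u}_*$ and $\bs{v}_\lambda\to\bs{v}_*$ along some subsequence, then $\mbf{P}^*_\lambda\to\mathrm{diag}(\bs{u}_*)\,\mathbf{1}\mathbf{1}^\top\,\mathrm{diag}(\bs{v}_*)=\bs{u}_*\bs{v}_*^\top$, and this limit still lies in the polytope $\mc{U}(\bs{\mu},\bs{\nu})$. Imposing the row sums $(\bs{u}_*)_i(\bs{v}_*^\top\mathbf{1})=\tfrac1n$ for every $i$ and the column sums $(\bs{v}_*)_j(\bs{u}_*^\top\mathbf{1})=\tfrac1n$ for every $j$ forces $\bs{u}_*$ and $\bs{v}_*$ to be constant vectors, and then the product $\bs{u}_*\bs{v}_*^\top$ equals $c\,\mathbf{1}\mathbf{1}^\top$ with $c$ pinned to $1/n^2$ by either marginal equation; note that this product is gauge-independent even though $(\bs{u}_*,\bs{v}_*)$ individually is not. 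Hence every subsequential limit of $\mbf{P}^*_\lambda$ equals $\tfrac1{n^2}\mathbf{1}\mathbf{1}^\top$.

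The step I expect to be the real obstacle is precisely the interchange of limits implicit above: a priori one only knows $K_\lambda\to\mathbf{1}\mathbf{1}^\top$, not that the Sinkhorn scalings converge, so the argument as stated is not yet complete. I would close this gap in one of two ways. The analytic route: because $\mbf{M}_t^{\downarrow}$ has bounded entries, $K_\lambda$ stays in a fixed compact set of strictly positive matrices (entries bounded away from $0$ and $\infty$) for all small $\lambda$; normalizing the gauge by $\max_i(\bs{u}_\lambda)_i=1$, the marginal equations then confine $(\bs{u}_\lambda,\bs{v}_\lambda)$ to a compact set, so subsequential limits exist, and by the previous paragraph they all coincide, giving full convergence $\mbf{P}^*_\lambda\to\tfrac1{n^2}\mathbf{1}\mathbf{1}^\top$. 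The variational route, which sidesteps the scalings entirely: minimizing $\langle\mbf{P},\mbf{M}_t^{\downarrow}\rangle-\tfrac1\lambda\langle\mbf{P},\log\mbf{P}\rangle$ over $\mc{U}(\bs{\mu},\bs{\nu})$ is equivalent to maximizing $H(\mbf{P})-\lambda\langle\mbf{P},\mbf{M}_t^{\downarrow}\rangle$ with $H(\mbf{P})=-\langle\mbf{P},\log\mbf{P}\rangle$; as $\lambda\to0$ this is a uniformly small perturbation of $\max_{\mbf{P}\in\mc{U}}H(\mbf{P})$, whose maximizer is unique by strict concavity of $H$ on the compact convex polytope and equals $\tfrac1{n^2}\mathbf{1}\mathbf{1}^\top$ by Lemma~\ref{lemma:uniform_entropy}; a standard stability estimate for strictly concave maximization (equivalently a $\Gamma$-convergence argument) then yields $\mbf{P}^*_\lambda\to\tfrac1{n^2}\mathbf{1}\mathbf{1}^\top$. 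I would present the factorization computation as the main line and the variational estimate as the rigorous backstop for the limit interchange.
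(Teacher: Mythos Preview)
Your main line---Sinkhorn factorization $\mbf{P}^*_\lambda=\mathrm{diag}(\bs{u}_\lambda)K_\lambda\mathrm{diag}(\bs{v}_\lambda)$, the pointwise limit $K_\lambda\to\mathbf{1}\mathbf{1}^\top$, and then the marginal constraints pinning the rank-one limit to $\tfrac{1}{n^2}\mathbf{1}\mathbf{1}^\top$---is exactly the paper's argument. You go beyond the paper in two places: you correctly track the gauge freedom in $(\bs{u},\bs{v})$ (the paper simply asserts $\bs{u}=\bs{v}=\tfrac1n\mathbf{1}$ without noting that only the product $\bs{u}\bs{v}^\top$ is determined), and you explicitly flag and close the interchange-of-limits gap that the paper's proof glosses over---the paper writes $P^*\to\bs{u}\bs{v}^\top$ as though the scalings were $\lambda$-independent. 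Your compactness argument under a gauge normalization and your variational backstop via strict concavity of $H$ are genuine additions that make rigorous what the paper leaves heuristic; either one suffices, and the variational route has the bonus of tying directly into Lemma~\ref{lemma:uniform_entropy}.
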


\section{Proof of Lemma 1}
\begin{lemma}[Uniform Plan Maximizes Entropy]
Under uniform marginals \(\bs{\mu} = \tfrac{1}{n}\mathbf{1}\) and \(\bs{\nu} = \tfrac{1}{n}\mathbf{1}\), the coupling
\[
\mbf{P}^* = \frac{1}{n^2}\,\mathbf{1}\,\mathbf{1}^\top
\]
uniquely maximizes the Shannon entropy
\[
H(P) \;=\; -\sum_{i,j} P_{ij}\,\log P_{ij}
\]
over the transport polytope \(\mathcal U(\bs{\mu},\bs{\nu})\).
\end{lemma}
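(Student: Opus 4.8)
The plan is to prove that the uniform coupling $\mbf{P}^* = \frac{1}{n^2}\mathbf{1}\mathbf{1}^\top$ is the unique maximizer of the Shannon entropy $H(P) = -\sum_{i,j} P_{ij}\log P_{ij}$ over the transport polytope $\mathcal{U}(\bs{\mu},\bs{\nu})$ with $\bs{\mu} = \bs{\nu} = \tfrac{1}{n}\mathbf{1}$. First I would observe that $\mbf{P}^*$ indeed lies in the polytope: each row sums to $n \cdot \frac{1}{n^2} = \frac{1}{n}$, and likewise each column, so the marginal constraints are satisfied. The core of the argument is then the strict concavity of the entropy functional: the map $x \mapsto -x\log x$ is strictly concave on $[0,\infty)$ (with the convention $0\log 0 = 0$), so $H$ is strictly concave on the convex polytope $\mathcal{U}(\bs{\mu},\bs{\nu})$, which guarantees that a maximizer, if it exists, is unique. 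Existence follows because $\mathcal{U}(\bs{\mu},\bs{\nu})$ is compact (closed and bounded in $\mathbb{R}^{n\times n}$) and $H$ is continuous.

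Next I would identify the maximizer explicitly using one of two standard routes. The cleaner route is a Gibbs-type / non-negativity-of-KL argument: for any $P \in \mathcal{U}(\bs{\mu},\bs{\nu})$, write
\[
H(P^*) - H(P) = \sum_{i,j} P_{ij}\log\frac{P_{ij}}{1/n^2} = D_{\mathrm{KL}}\!\left(P \,\middle\|\, \tfrac{1}{n^2}\mathbf{1}\mathbf{1}^\top\right) \ge 0,
\]
where the first equality uses $\sum_{i,j}P_{ij} = 1$ to replace $-\sum P_{ij}\log(1/n^2)$ by the constant $\log n^2$, and the inequality is the non-negativity of the Kullback–Leibler divergence (equivalently Jensen applied to $-\log$), with equality if and only if $P = \frac{1}{n^2}\mathbf{1}\mathbf{1}^\top$. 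This simultaneously establishes that $P^*$ is a maximizer and that it is the unique one, without needing to invoke compactness separately. Alternatively, one could run a Lagrange-multiplier computation: stationarity of $H(P) - \sum_i a_i(\sum_j P_{ij} - \tfrac1n) - \sum_j b_j(\sum_i P_{ij} - \tfrac1n)$ forces $\log P_{ij} = -1 - a_i - b_j$, i.e. $P_{ij} = \exp(-1-a_i-b_j)$ factorizes as a rank-one positive matrix $\alpha_i\beta_j$; imposing the uniform row and column sums then pins down $\alpha_i\beta_j = 1/n^2$ for all $i,j$.

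I expect the main obstacle to be essentially bookkeeping rather than conceptual: one must be careful about boundary terms, since $\mathcal{U}(\bs{\mu},\bs{\nu})$ contains couplings with some $P_{ij} = 0$, so the Lagrange-multiplier argument is only valid at interior critical points and needs the concavity/KL argument to rule out boundary maximizers. The KL route sidesteps this entirely because the identity $H(P^*) - H(P) = D_{\mathrm{KL}}(P \| P^*)$ holds for all $P$ in the polytope (the reference measure $P^*$ has all strictly positive entries, so $\log(P_{ij}/P^*_{ij})$ is well-defined wherever $P_{ij} > 0$, and terms with $P_{ij}=0$ vanish), and the equality condition in the Gibbs inequality immediately gives uniqueness. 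Hence I would present the KL-based proof as the main line and, if desired, mention the Lagrangian computation as a remark giving intuition for why the optimum is the rank-one uniform plan.
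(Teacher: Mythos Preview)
Your proof is correct. The paper's own proof takes the Lagrange-multiplier route you sketch as your alternative: it writes the Lagrangian with row and column multipliers, solves the stationarity condition to obtain the rank-one form $P_{ij}=u_i v_j$, enforces the uniform marginals to pin down $u_i=v_j=1/n$, and then appeals to strict concavity of $H$ on $\mathcal{U}(\bs{\mu},\bs{\nu})$ to conclude this stationary point is the unique global maximizer. Your primary argument via the identity $H(P^*)-H(P)=D_{\mathrm{KL}}(P\,\|\,P^*)\ge 0$ is a genuinely different line: it delivers optimality and uniqueness in one stroke, and, as you correctly note, it handles boundary couplings (some $P_{ij}=0$) without additional care because the reference $P^*$ has full support. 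The paper's route, by contrast, performs the first-order analysis only at interior points and relies on the final concavity appeal to cover the boundary. Both arguments are valid; yours is slightly more self-contained, while the Lagrangian computation in the paper makes the rank-one structure of the optimizer more explicit.
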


\begin{proof}
Introduce Lagrange multipliers \(\{\alpha_i\}_{i=1}^n\) and \(\{\beta_j\}_{j=1}^n\) for the row and column constraints, respectively:
\[
\sum_j P_{ij} = \mu_i = \tfrac{1}{n},
\quad
\sum_i P_{ij} = \nu_j = \tfrac{1}{n}.
\]
The Lagrangian is
\[
\mathcal{L}(P,\alpha,\beta)
= -\sum_{i,j} P_{ij}\log P_{ij}
+ \sum_i \alpha_i\Bigl(\sum_j P_{ij} - \tfrac{1}{n}\Bigr)
+ \sum_j \beta_j\Bigl(\sum_i P_{ij} - \tfrac{1}{n}\Bigr).
\]
Stationarity \(\partial \mathcal{L}/\partial P_{ij}=0\) yields
\[
-\bigl(\log P_{ij} + 1\bigr) + \alpha_i + \beta_j = 0
\quad\Longrightarrow\quad
P_{ij} = \exp(\alpha_i + \beta_j - 1) = u_i\,v_j,
\]
where we set \(u_i = e^{\alpha_i-1}\) and \(v_j = e^{\beta_j}\).

Enforcing the marginals gives
\[
\sum_j u_i v_j = u_i\sum_j v_j = \tfrac{1}{n},
\quad
\sum_i u_i v_j = v_j\sum_i u_i = \tfrac{1}{n}.
\]
Thus \(\sum_j v_j = \sum_i u_i = 1\) and
\(\;u_i = v_j = \tfrac{1}{n}\) for all \(i,j\).  Hence
\(\;P_{ij} = u_i v_j = \tfrac{1}{n^2}\).

Finally, since \(H(P)\) is strictly concave on \(\mathcal U(\bs{\mu},\bs{\nu})\), this stationary point is the unique global maximizer.
\end{proof}

\section{Additional Qualitative Results} \label{sec:add_example}

In this section, we present additional qualitative results to further demonstrate the effectiveness and versatility of our proposed method, ASAG, across various generation tasks and in combination with other guidance approaches.

\paragraph{Comparison of Guidance Sampling with Our Method}
Fig.\ref{fig:uncond_supple} and \ref{fig:cond_supple} provide qualitative comparisons between ASAG and existing guidance methods including Vanilla, CFG, PAG, and SEG. In both unconditional and conditional generation settings, while existing methods often enhance visual appearance, they tend to distort the original structural and semantic consistency. In contrast, ASAG not only significantly improves visual quality but also faithfully preserves the structural layout and semantic intent of the original input.

\paragraph{Effect of Guidance Scale in ASAG}
Fig.~\ref{fig:uncond_scale}, \ref{fig:canny_supple}, and \ref{fig:depth_supple} illustrate the impact of varying the guidance scale in ASAG, under both unconditional generation and conditional generation using ControlNet. We observe that increasing the ASAG guidance scale consistently enhances overall generation quality across different input conditions, validating the effectiveness of our approach. In particular, for unconditional generation, guidance scales of 1.5 and 2.0 strike a good balance between quality and stability. For ControlNet-based conditional tasks, scales in the range of 1.5 to 3.0 are effective in achieving high visual fidelity.

\paragraph{Comparison with and without CFG}
Fig.~\ref{fig:pose_guidance} shows qualitative results under the pose condition with and without classifier-free guidance (CFG) using ControlNet. Interestingly, PAG fails to deliver meaningful improvements without CFG. In contrast, ASAG surprisingly achieves high-quality generations even without CFG. Furthermore, when CFG is applied, our method further boosts generation quality, indicating that ASAG serves as a strong and reliable guidance sampling strategy in both CFG and non-CFG scenarios.

\begin{figure*}[t!]
\centering
\includegraphics[width=0.9\linewidth]{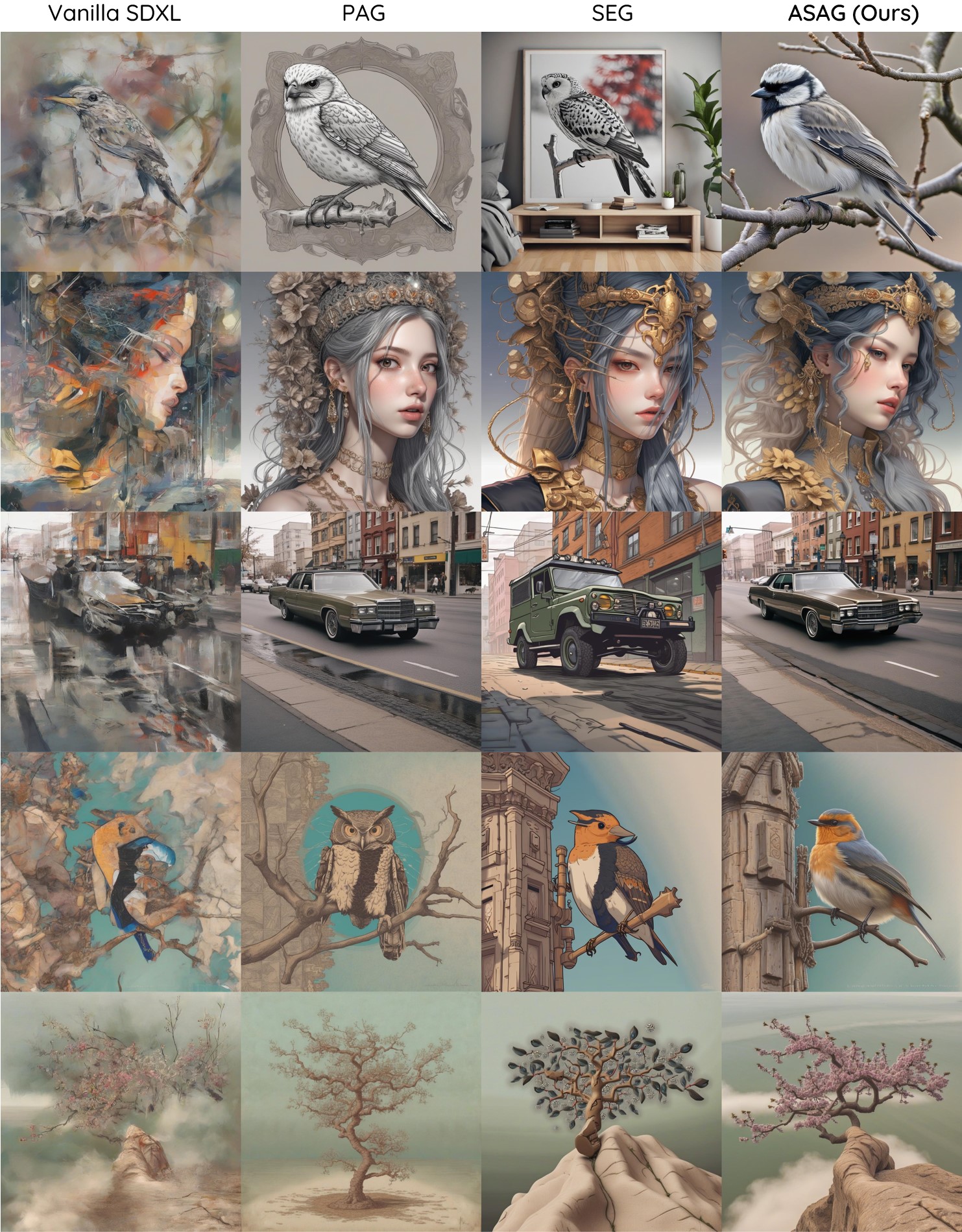}
\vspace{-0.5em}
\caption{Qualitative comparison on unconditional generation.
Unlike other guidance methods that often distort the original structure, ASAG maintains structural and semantic consistency while significantly improving visual fidelity. }
\label{fig:uncond_supple}
\vspace{-0.5em}
\end{figure*}

\begin{figure*}[t!]
\centering
\includegraphics[width=0.9\linewidth]{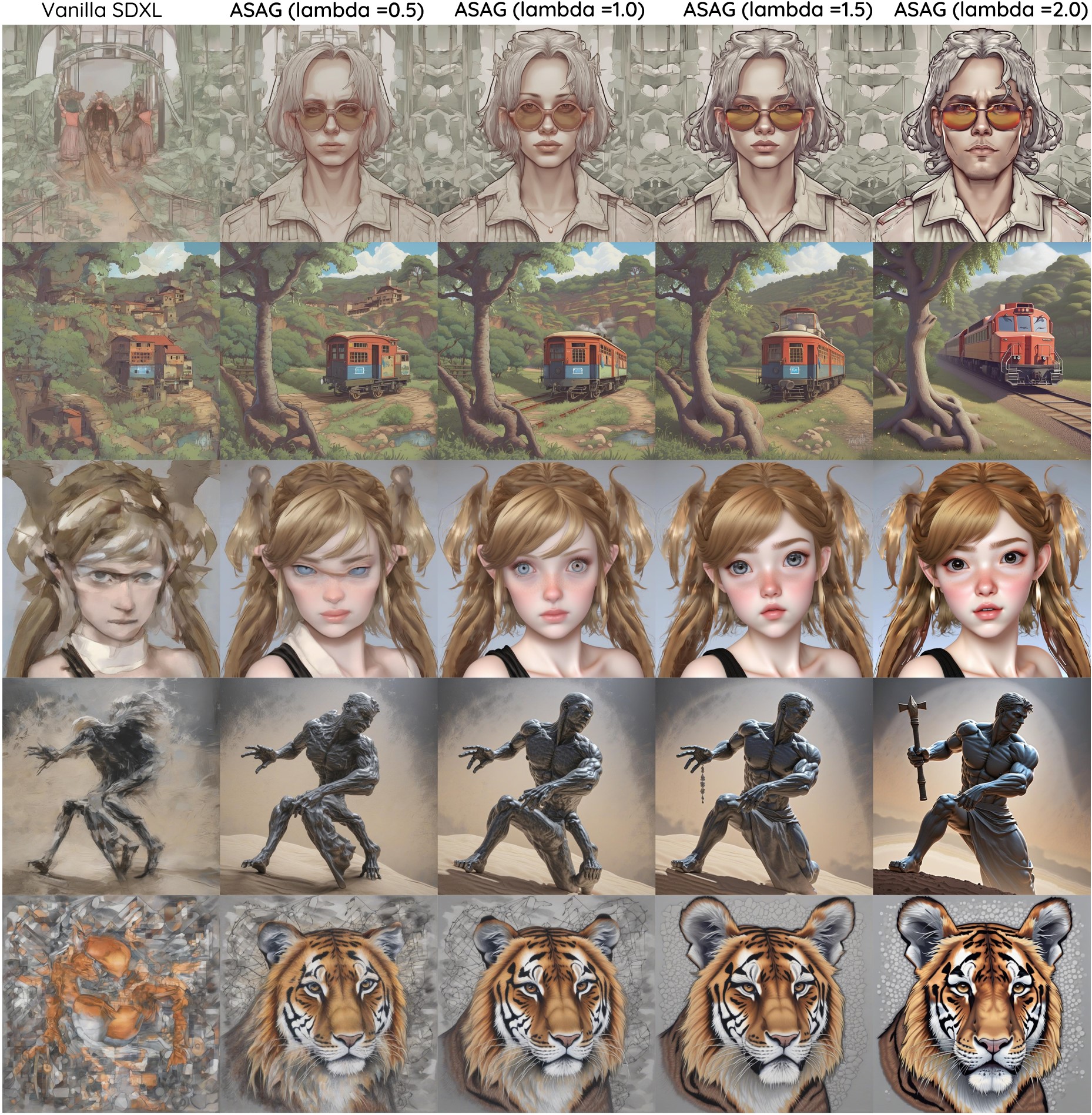}
\vspace{-0.5em}
\caption{Unconditional generation results with varying ASAG guidance scales.
Increasing the ASAG scale consistently improves visual quality over vanilla SDXL. Scales of 1.5 and 2.0 are found to be effective configurations for balancing quality and stability.}
\label{fig:uncond_scale}
\vspace{-0.5em}
\end{figure*}

\begin{figure*}[t!]
\centering
\includegraphics[width=0.8\linewidth]{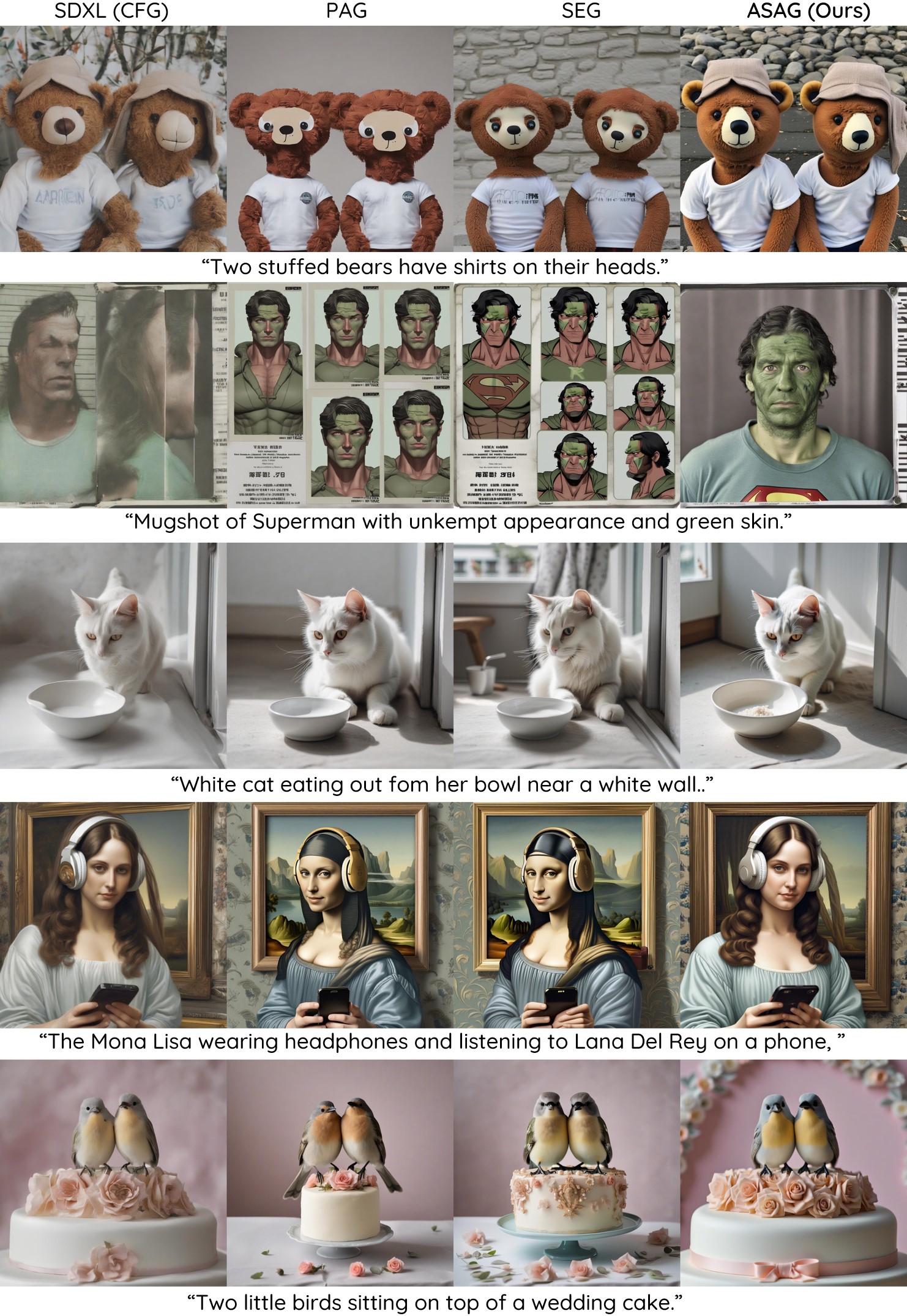}
\vspace{-0.5em}
\caption{Conditional generation results with CFG, PAG, SEG, and ASAG.
Unlike other methods that frequently disrupt structural integrity, ASAG enhances visual fidelity while retaining the original layout and semantic alignment.}
\label{fig:cond_supple}
\vspace{-0.5em}
\end{figure*}

\begin{figure*}[t!]
\centering
\includegraphics[width=1\linewidth]{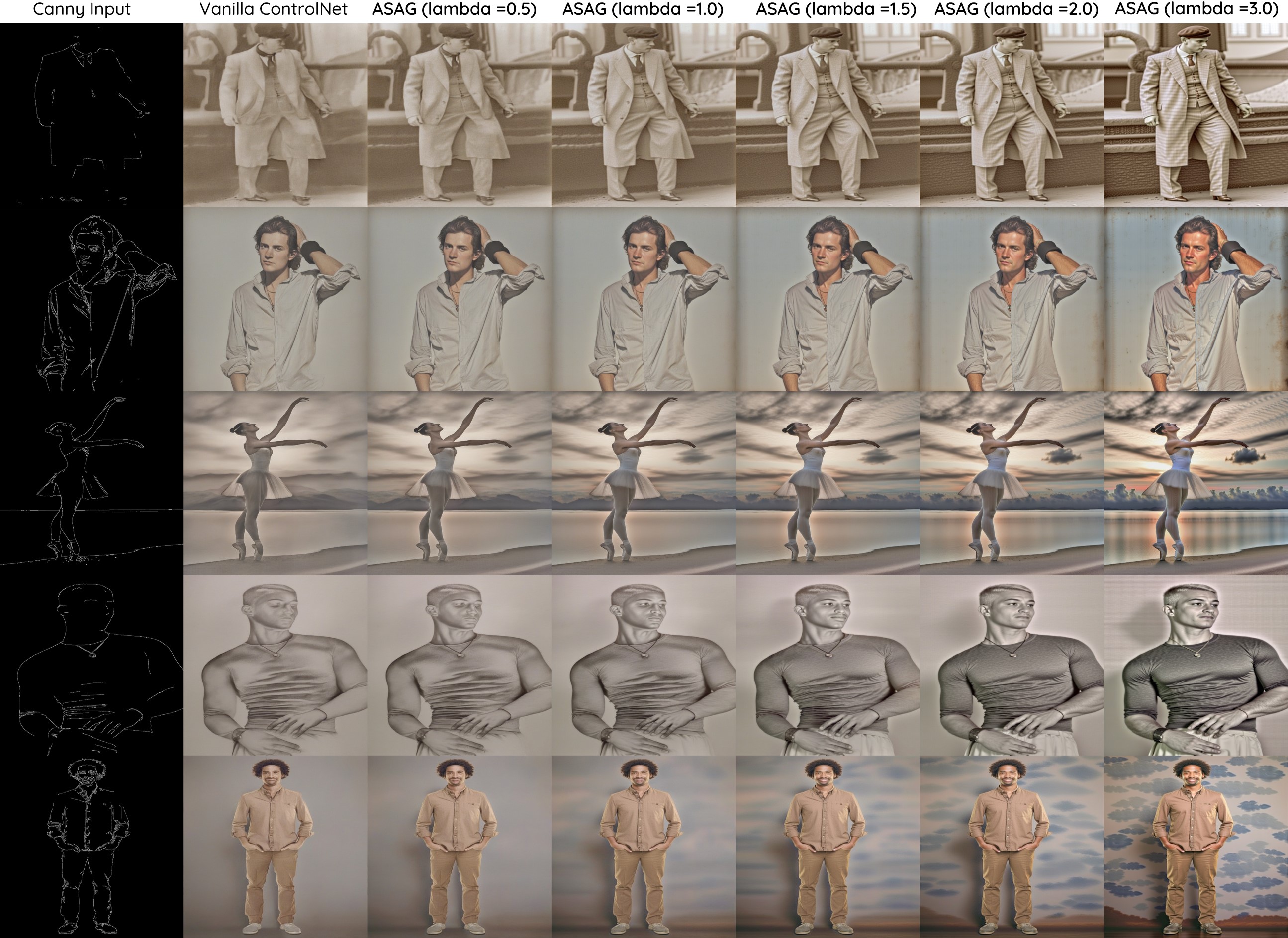}
\vspace{-0.5em}
\caption{Comparison results on Canny-conditioned generation with ControlNet using different guidance scales.
We observe that guidance scales in the range of 1.5 to 3.0 yield enhanced visual quality while maintaining stable generation, making them suitable configurations for ControlNet.}
\label{fig:canny_supple}
\vspace{-0.5em}
\end{figure*}

\begin{figure*}[t!]
\centering
\includegraphics[width=1\linewidth]{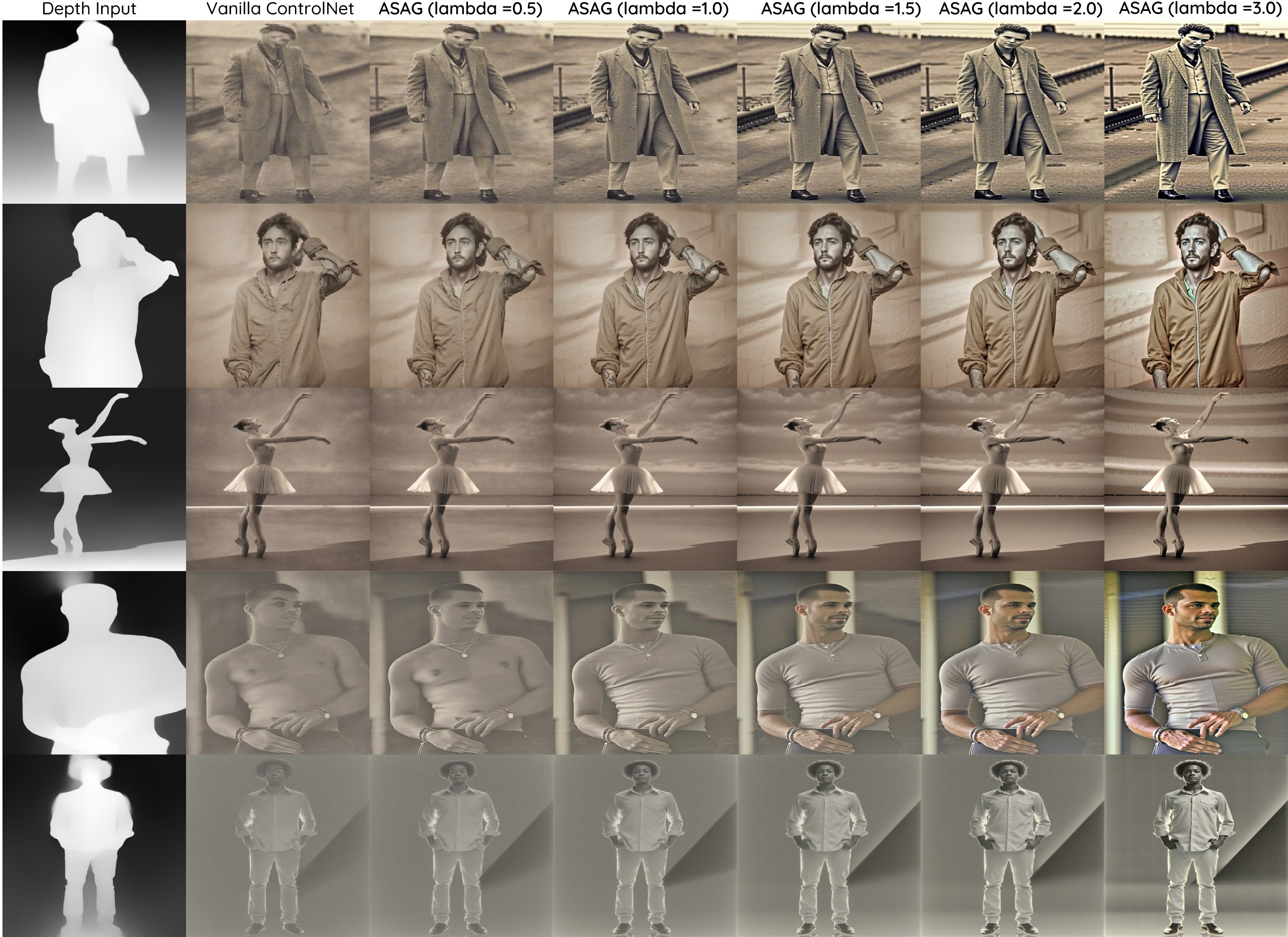}
\vspace{-0.5em}
\caption{Comparison results on depth-conditioned generation with ControlNet using different guidance scales.
We find that guidance scales in the range of 1.5 to 3.0 are effective for ControlNet, yielding enhanced visual quality while maintaining stable generation.
}
\label{fig:depth_supple}
\vspace{-0.5em}
\end{figure*}

\begin{figure*}[t!]
\centering
\includegraphics[width=0.9\linewidth]{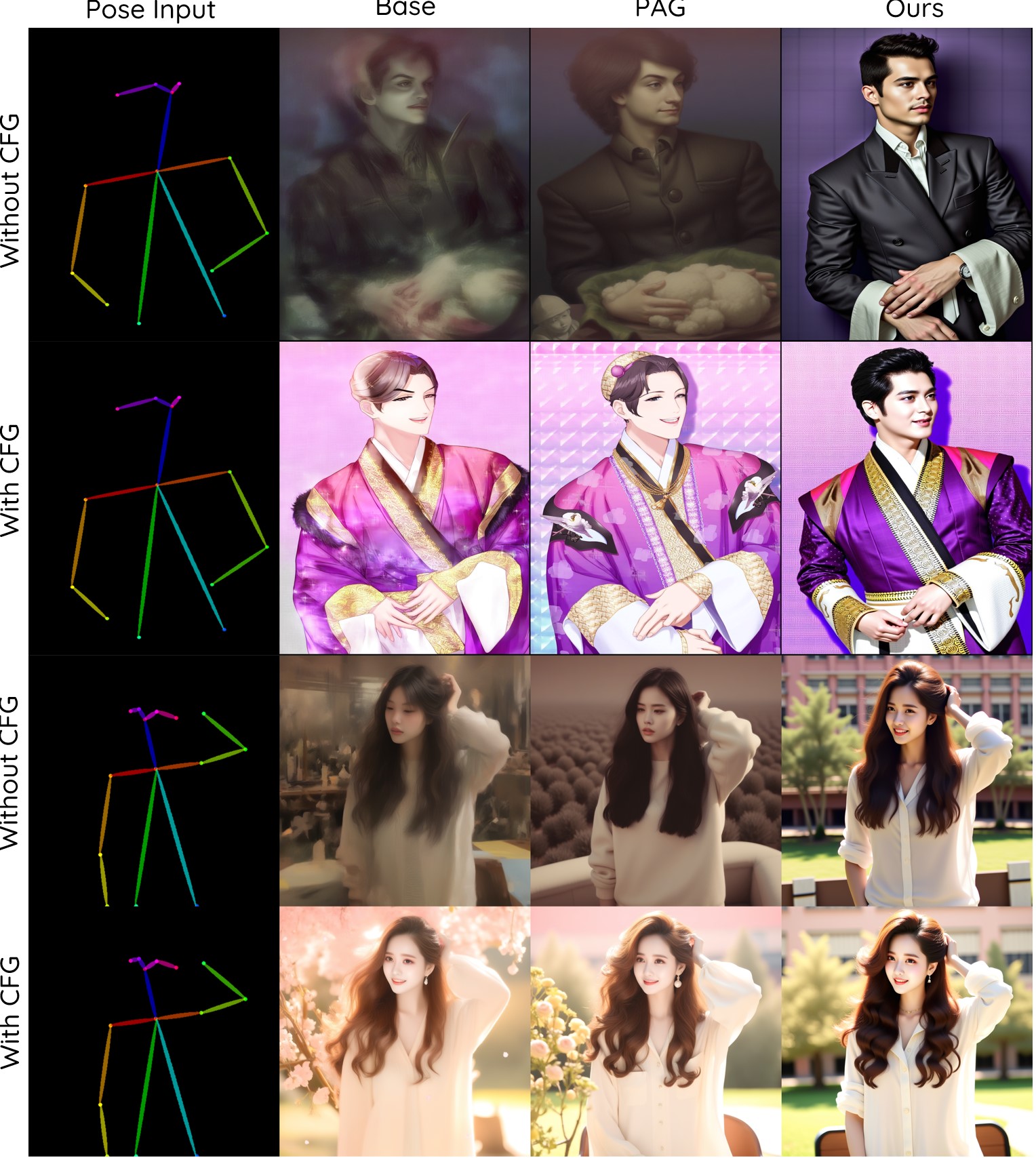}
\vspace{-0.5em}
\caption{Comparison results on ControlNet under pose condition, with and without CFG.
ASAG produces visually pleasing results even without CFG, whereas PAG fails to generate satisfactory outputs in the same setting. Furthermore, when combined with CFG, our method further boosts generation quality. }
\label{fig:pose_guidance}
\vspace{-0.5em}
\end{figure*}


\begin{thebibliography}{38}
\providecommand{\natexlab}[1]{#1}

\bibitem[{Ahn et~al.(2025)Ahn, Cho, Min, Jang, Kim, Kim, Park, Jin, and Kim}]{PAG}
Ahn, D.; Cho, H.; Min, J.; Jang, W.; Kim, J.; Kim, S.; Park, H.~H.; Jin, K.~H.; and Kim, S. 2025.
\newblock Self-rectifying diffusion sampling with perturbed-attention guidance.
\newblock In \emph{European Conference on Computer Vision}, 1--17. Springer.

\bibitem[{Blattmann et~al.(2023)Blattmann, Dockhorn, Kulal, Mendelevitch, Kilian, Lorenz, Levi, English, Voleti, Letts et~al.}]{stablevideo}
Blattmann, A.; Dockhorn, T.; Kulal, S.; Mendelevitch, D.; Kilian, M.; Lorenz, D.; Levi, Y.; English, Z.; Voleti, V.; Letts, A.; et~al. 2023.
\newblock Stable video diffusion: Scaling latent video diffusion models to large datasets.
\newblock \emph{arXiv preprint arXiv:2311.15127}.

\bibitem[{Chen et~al.(2022)Chen, Yao, Song, Li, Rao, and Zhang}]{chen2022prompt}
Chen, G.; Yao, W.; Song, X.; Li, X.; Rao, Y.; and Zhang, K. 2022.
\newblock Prompt Learning with Optimal Transport for Vision-Language Models.
\newblock \emph{arXiv preprint arXiv:2210.01253}.

\bibitem[{Chen et~al.(2024{\natexlab{a}})Chen, Zhang, Cun, Xia, Wang, Weng, and Shan}]{videocrafter2}
Chen, H.; Zhang, Y.; Cun, X.; Xia, M.; Wang, X.; Weng, C.; and Shan, Y. 2024{\natexlab{a}}.
\newblock Videocrafter2: Overcoming data limitations for high-quality video diffusion models.
\newblock In \emph{Proceedings of the IEEE/CVF Conference on Computer Vision and Pattern Recognition}, 7310--7320.

\bibitem[{Chen et~al.(2024{\natexlab{b}})Chen, Ge, Xie, Wu, Yao, Ren, Wang, Luo, Lu, and Li}]{pixart2}
Chen, J.; Ge, C.; Xie, E.; Wu, Y.; Yao, L.; Ren, X.; Wang, Z.; Luo, P.; Lu, H.; and Li, Z. 2024{\natexlab{b}}.
\newblock Pixart-$\sigma$: Weak-to-strong training of diffusion transformer for 4k text-to-image generation.
\newblock In \emph{European Conference on Computer Vision}, 74--91. Springer.

\bibitem[{Chen et~al.(2020)Chen, Gan, Cheng, Li, Carin, and Liu}]{chen2020graph}
Chen, L.; Gan, Z.; Cheng, Y.; Li, L.; Carin, L.; and Liu, J. 2020.
\newblock Graph optimal transport for cross-domain alignment.
\newblock In \emph{International Conference on Machine Learning}, 1542--1553. PMLR.

\bibitem[{Chung et~al.(2024)Chung, Kim, Park, Nam, and Ye}]{CFG++}
Chung, H.; Kim, J.; Park, G.~Y.; Nam, H.; and Ye, J.~C. 2024.
\newblock Cfg++: Manifold-constrained classifier free guidance for diffusion models.
\newblock \emph{arXiv preprint arXiv:2406.08070}.

\bibitem[{Cuturi(2013)}]{sinkhorn}
Cuturi, M. 2013.
\newblock Sinkhorn distances: Lightspeed computation of optimal transport.
\newblock \emph{Advances in neural information processing systems}, 26.

\bibitem[{Dhariwal and Nichol(2021)}]{CG}
Dhariwal, P.; and Nichol, A. 2021.
\newblock Diffusion models beat gans on image synthesis.
\newblock \emph{Advances in neural information processing systems}, 34: 8780--8794.

\bibitem[{Efron(2011)}]{efron2011tweedie}
Efron, B. 2011.
\newblock Tweedie’s formula and selection bias.
\newblock \emph{Journal of the American Statistical Association}, 106(496): 1602--1614.

\bibitem[{Esser et~al.(2024{\natexlab{a}})Esser, Kulal, Blattmann, Entezari, M{\"u}ller, Saini, Levi, Lorenz, Sauer, Boesel et~al.}]{stablediffusion3}
Esser, P.; Kulal, S.; Blattmann, A.; Entezari, R.; M{\"u}ller, J.; Saini, H.; Levi, Y.; Lorenz, D.; Sauer, A.; Boesel, F.; et~al. 2024{\natexlab{a}}.
\newblock Scaling rectified flow transformers for high-resolution image synthesis.
\newblock In \emph{Forty-first international conference on machine learning}.

\bibitem[{Esser et~al.(2024{\natexlab{b}})Esser, Kulal, Blattmann, Entezari, M{\"u}ller, Saini, Levi, Lorenz, Sauer, Boesel et~al.}]{stable3}
Esser, P.; Kulal, S.; Blattmann, A.; Entezari, R.; M{\"u}ller, J.; Saini, H.; Levi, Y.; Lorenz, D.; Sauer, A.; Boesel, F.; et~al. 2024{\natexlab{b}}.
\newblock Scaling rectified flow transformers for high-resolution image synthesis.
\newblock In \emph{Forty-first international conference on machine learning}.

\bibitem[{Hessel et~al.(2021)Hessel, Holtzman, Forbes, Bras, and Choi}]{clipscore}
Hessel, J.; Holtzman, A.; Forbes, M.; Bras, R.~L.; and Choi, Y. 2021.
\newblock Clipscore: A reference-free evaluation metric for image captioning.
\newblock \emph{arXiv preprint arXiv:2104.08718}.

\bibitem[{Heusel et~al.(2017)Heusel, Ramsauer, Unterthiner, Nessler, and Hochreiter}]{fid}
Heusel, M.; Ramsauer, H.; Unterthiner, T.; Nessler, B.; and Hochreiter, S. 2017.
\newblock Gans trained by a two time-scale update rule converge to a local nash equilibrium.
\newblock \emph{Advances in neural information processing systems}, 30.

\bibitem[{Ho, Jain, and Abbeel(2020)}]{ho2020denoising}
Ho, J.; Jain, A.; and Abbeel, P. 2020.
\newblock Denoising diffusion probabilistic models.
\newblock \emph{Advances in neural information processing systems}, 33: 6840--6851.

\bibitem[{Ho and Salimans(2022)}]{CFG}
Ho, J.; and Salimans, T. 2022.
\newblock Classifier-free diffusion guidance.
\newblock \emph{arXiv preprint arXiv:2207.12598}.

\bibitem[{Hong(2024)}]{SEG}
Hong, S. 2024.
\newblock Smoothed energy guidance: Guiding diffusion models with reduced energy curvature of attention.
\newblock \emph{arXiv preprint arXiv:2408.00760}.

\bibitem[{Hong et~al.(2023)Hong, Lee, Jang, and Kim}]{SAG}
Hong, S.; Lee, G.; Jang, W.; and Kim, S. 2023.
\newblock Improving sample quality of diffusion models using self-attention guidance.
\newblock In \emph{Proceedings of the IEEE/CVF International Conference on Computer Vision}, 7462--7471.

\bibitem[{Karras et~al.(2024)Karras, Aittala, Kynk{\"a}{\"a}nniemi, Lehtinen, Aila, and Laine}]{AutoGuide}
Karras, T.; Aittala, M.; Kynk{\"a}{\"a}nniemi, T.; Lehtinen, J.; Aila, T.; and Laine, S. 2024.
\newblock Guiding a Diffusion Model with a Bad Version of Itself.
\newblock \emph{arXiv preprint arXiv:2406.02507}.

\bibitem[{Kim, Oh, and Ye(2024)}]{otseg}
Kim, K.; Oh, Y.; and Ye, J.~C. 2024.
\newblock {OTSeg: Multi-Prompt Sinkhorn Attention for Zero-Shot Semantic Segmentation}.
\newblock In \emph{European Conference on Computer Vision}, 200--217. Springer.

\bibitem[{Kim and Ye(2021)}]{kim2021noise2score}
Kim, K.; and Ye, J.~C. 2021.
\newblock Noise2Score: Tweedie’s Approach to Self-supervised Image Denoising Without Clean Images.
\newblock \emph{Advances in Neural Information Processing Systems}, 34: 864--874.

\bibitem[{Kirstain et~al.(2023)Kirstain, Polyak, Singer, Matiana, Penna, and Levy}]{pick}
Kirstain, Y.; Polyak, A.; Singer, U.; Matiana, S.; Penna, J.; and Levy, O. 2023.
\newblock Pick-a-pic: An open dataset of user preferences for text-to-image generation.
\newblock \emph{Advances in Neural Information Processing Systems}, 36: 36652--36663.

\bibitem[{Li et~al.(2024)Li, Luo, Chen, Ma, and Qi}]{SelfG}
Li, T.; Luo, W.; Chen, Z.; Ma, L.; and Qi, G.-J. 2024.
\newblock Self-Guidance: Boosting Flow and Diffusion Generation on Their Own.
\newblock \emph{arXiv preprint arXiv:2412.05827}.

\bibitem[{Lin et~al.(2014)Lin, Maire, Belongie, Hays, Perona, Ramanan, Doll{\'a}r, and Zitnick}]{coco}
Lin, T.-Y.; Maire, M.; Belongie, S.; Hays, J.; Perona, P.; Ramanan, D.; Doll{\'a}r, P.; and Zitnick, C.~L. 2014.
\newblock Microsoft coco: Common objects in context.
\newblock In \emph{Computer Vision--ECCV 2014: 13th European Conference, Zurich, Switzerland, September 6-12, 2014, Proceedings, Part V 13}, 740--755. Springer.

\bibitem[{Liu et~al.(2020)Liu, Zhu, Yamada, and Yang}]{liu2020semantic}
Liu, Y.; Zhu, L.; Yamada, M.; and Yang, Y. 2020.
\newblock Semantic correspondence as an optimal transport problem.
\newblock In \emph{Proceedings of the IEEE/CVF Conference on Computer Vision and Pattern Recognition}, 4463--4472.

\bibitem[{Podell et~al.(2023)Podell, English, Lacey, Blattmann, Dockhorn, M{\"u}ller, Penna, and Rombach}]{sdxl}
Podell, D.; English, Z.; Lacey, K.; Blattmann, A.; Dockhorn, T.; M{\"u}ller, J.; Penna, J.; and Rombach, R. 2023.
\newblock Sdxl: Improving latent diffusion models for high-resolution image synthesis.
\newblock \emph{arXiv preprint arXiv:2307.01952}.

\bibitem[{Rombach et~al.(2022)Rombach, Blattmann, Lorenz, Esser, and Ommer}]{stablediffusion}
Rombach, R.; Blattmann, A.; Lorenz, D.; Esser, P.; and Ommer, B. 2022.
\newblock High-resolution image synthesis with latent diffusion models.
\newblock In \emph{Proceedings of the IEEE/CVF conference on computer vision and pattern recognition}, 10684--10695.

\bibitem[{Ruiz et~al.(2023)Ruiz, Li, Jampani, Pritch, Rubinstein, and Aberman}]{dreambooth}
Ruiz, N.; Li, Y.; Jampani, V.; Pritch, Y.; Rubinstein, M.; and Aberman, K. 2023.
\newblock Dreambooth: Fine tuning text-to-image diffusion models for subject-driven generation.
\newblock In \emph{Proceedings of the IEEE/CVF conference on computer vision and pattern recognition}, 22500--22510.

\bibitem[{Sadat et~al.(2024)Sadat, Kansy, Hilliges, and Weber}]{TSG}
Sadat, S.; Kansy, M.; Hilliges, O.; and Weber, R.~M. 2024.
\newblock No training, no problem: Rethinking classifier-free guidance for diffusion models.
\newblock \emph{arXiv preprint arXiv:2407.02687}.

\bibitem[{Saharia et~al.(2022)Saharia, Chan, Saxena, Li, Whang, Denton, Ghasemipour, Gontijo~Lopes, Karagol~Ayan, Salimans et~al.}]{drawbench}
Saharia, C.; Chan, W.; Saxena, S.; Li, L.; Whang, J.; Denton, E.~L.; Ghasemipour, K.; Gontijo~Lopes, R.; Karagol~Ayan, B.; Salimans, T.; et~al. 2022.
\newblock Photorealistic text-to-image diffusion models with deep language understanding.
\newblock \emph{Advances in neural information processing systems}, 35: 36479--36494.

\bibitem[{Sander et~al.(2022)Sander, Ablin, Blondel, and Peyr{\'e}}]{sinkformers}
Sander, M.~E.; Ablin, P.; Blondel, M.; and Peyr{\'e}, G. 2022.
\newblock Sinkformers: Transformers with doubly stochastic attention.
\newblock In \emph{International Conference on Artificial Intelligence and Statistics}, 3515--3530. PMLR.

\bibitem[{Schmitzer(2019)}]{schmitzer2019stabilized}
Schmitzer, B. 2019.
\newblock Stabilized sparse scaling algorithms for entropy regularized transport problems.
\newblock \emph{SIAM Journal on Scientific Computing}, 41(3): A1443--A1481.

\bibitem[{Song, Meng, and Ermon(2021)}]{song2021ddim}
Song, J.; Meng, C.; and Ermon, S. 2021.
\newblock Denoising Diffusion Implicit Models.
\newblock In \emph{ICLR}.

\bibitem[{Song et~al.(2021)Song, Sohl-Dickstein, Kingma, Kumar, Ermon, and Poole}]{song2021scorebased}
Song, Y.; Sohl-Dickstein, J.; Kingma, D.~P.; Kumar, A.; Ermon, S.; and Poole, B. 2021.
\newblock Score-Based Generative Modeling through Stochastic Differential Equations.
\newblock In \emph{International Conference on Learning Representations}.

\bibitem[{Vincent(2011)}]{vincent2011connection}
Vincent, P. 2011.
\newblock A connection between score matching and denoising autoencoders.
\newblock \emph{Neural computation}, 23(7): 1661--1674.

\bibitem[{Wu et~al.(2023)Wu, Hao, Sun, Chen, Zhu, Zhao, and Li}]{hpsv2}
Wu, X.; Hao, Y.; Sun, K.; Chen, Y.; Zhu, F.; Zhao, R.; and Li, H. 2023.
\newblock Human preference score v2: A solid benchmark for evaluating human preferences of text-to-image synthesis.
\newblock \emph{arXiv preprint arXiv:2306.09341}.

\bibitem[{Xie et~al.(2024)Xie, Chen, Chen, Cai, Tang, Lin, Zhang, Li, Zhu, Lu et~al.}]{sana}
Xie, E.; Chen, J.; Chen, J.; Cai, H.; Tang, H.; Lin, Y.; Zhang, Z.; Li, M.; Zhu, L.; Lu, Y.; et~al. 2024.
\newblock Sana: Efficient high-resolution image synthesis with linear diffusion transformers.
\newblock \emph{arXiv preprint arXiv:2410.10629}.

\bibitem[{Xu et~al.(2024)Xu, Liu, Wu, Tong, Li, Ding, Tang, and Dong}]{imagereward}
Xu, J.; Liu, X.; Wu, Y.; Tong, Y.; Li, Q.; Ding, M.; Tang, J.; and Dong, Y. 2024.
\newblock Imagereward: Learning and evaluating human preferences for text-to-image generation.
\newblock \emph{Advances in Neural Information Processing Systems}, 36.

\end{thebibliography}
\end{document}